\title{Optimistic Interior Point Methods for Sequential Testing by Betting}
\author{Can Chen\footnote{University of California San Diego, cac024@ucsd.edu.} \text{  and} Jun-Kun Wang \footnote{
Corresponding author. University of California San Diego, jkw005@ucsd.edu.}}
\date{}
\newtheorem{theorem}{Theorem}
\newtheorem{lemma}{Lemma}
\def\H{\mathcal{H}}
\def\E{\mathbb{E}}
\def\K{\mathcal{K}}
\def\P{\mathcal{P}}
\def\I{\mathcal{I}}
\newcommand{\reals}{\mathcal{R}}
\begin{document}

\maketitle

\begin{abstract}
The technique of ``testing by betting" frames nonparametric sequential hypothesis testing as a multiple-round game, where a player bets on future observations that arrive in a streaming fashion, accumulates wealth that quantifies evidence against the null hypothesis, and rejects the null once the wealth exceeds a specified threshold while controlling the false positive error. Designing an online learning algorithm 
that achieves a small regret in the game can help rapidly accumulate the bettor's wealth, which in turn can shorten the time to reject the null hypothesis under the alternative $H_1$. However, many of the existing works employ the Online Newton Step (ONS) to update within a halved decision space to avoid a gradient explosion issue, which is potentially conservative for rapid wealth accumulation. In this paper, we introduce a novel strategy utilizing interior-point methods in optimization that allows updates across the entire interior of the decision space without the risk of gradient explosion. 
Our approach not only maintains strong statistical guarantees but also facilitates faster null hypothesis rejection, while being as computationally lightweight as ONS thanks to its closed-form updates.
\end{abstract}

\section{Introduction}

Sequential hypothesis testing examines a sequence of observations with the goal of rigorously assessing the validity of the null hypothesis $\mathcal{H}_0$ against the alternative $\mathcal{H}_1$. 
Although a classical problem in statistics, sequential hypothesis testing has gained renewed significance in contemporary contexts \cite{ramdas2023game,grunwald2024anytime}. One of the main reasons perhaps is the recent surge in algorithmic systems and machine learning applications \cite{Chugg2023,CW2025,teneggi2024bet,bar2024protected}, which has given rise to a myriad of nuanced desiderata for the hypothesis testing methods.
These requirements include the ability to continuously monitor incoming data rather than adhere to a fixed-sample size setting; the need to conduct a nonparametric test instead of making distributional assumptions such as assuming that the data follow a certain distribution; and the call to maintain a valid test at any stopping time.
This has hence catalyzed substantial research interest in the algorithmic development of sequential hypothesis testing for simultaneously satisfying these criteria while enjoying provable statistical guarantees \cite{ramdas2024hypothesis}.

In pursuit of these desiderata, sequential hypothesis testing via betting has evolved into a cornerstone methodology and has seen substantial advancements in recent years \cite{shafer2021testing,vovk2021values,shekhar2023nonparametric,ramdas2023game,grunwald2024anytime}. 
In particular, the techniques of ``testing by betting" have found use in many machine learning applications, which include: auditing the fairness of a classifier \cite{Chugg2023}, online detection of whether a text sequence source is an LLM \cite{CW2025}, monitoring distribution shifts for test-time adaptation \cite{bar2024protected}, examining the importance of semantic concepts in a model's prediction for explainable AI \cite{teneggi2024bet}, evaluating voxel responses in neuroimaging data \cite{fischer2024multiple}, testing conditional independence of ride duration and membership for a bikeshare system \cite{grunwald2024anytime}, 
multi-arm bandit problems \cite{cho2024peeking}, adversarial attacks \cite{pandeva2024deep}, 
estimating the mean of a bounded random variable \cite{waudby2024estimating}, and more \cite{shaer2023model,podkopaev2023sequential,podkopaev2024sequential,PXL24,dai2025individual}.

The game-theoretic approaches typically frame the sequential hypothesis testing as a repeated game between the online learner and nature \cite{shafer2019game,shafer2021testing,shekhar2023nonparametric}. Informally, the crux lies in designing a wealth process for an online learner (i.e., the bettor) in the game such that, when the learner's wealth becomes sufficiently large, it increases their confidence in rejecting the null hypothesis. 
However, to be more concrete and get the ball rolling, 
let us use a non-parametric two-sample testing task in many prior works as an example (e.g., \cite{shekhar2023nonparametric,Chugg2023,CW2025,teneggi2024bet}). In this setting, one observes two sequences of samples $\{ x_t\}_{t\geq 1}$ and $\{ y_t\}_{t\geq1}$, where $x_t$ and $y_t$ are pair of bounded random variables observed at time $t$, with their population means denoted as $\mu_x:=\mathbb{E}[x_t]$ and $\mu_y:=\mathbb{E}[y_t]$ respectively. The hypothesis testing task
of \emph{difference-in-means testing}
 can be expressed as: 
\begin{mdframed}
\begin{equation} \label{scenario-1a}
\textbf{(Difference-in-means testing):}
\qquad 
\H_0: \mu_x = \mu_y, \quad   \text{versus} \quad  \H_1: \mu_x \neq \mu_y.
\end{equation}
\end{mdframed}
As the related works \cite{Chugg2023,shekhar2023nonparametric,teneggi2024bet}, we assume $x_t \in [0, 1]$ and $y_t \in [0, 1]$, and we note that the modifications for extending the range is relatively straightforward (see e.g., \cite{CW2025}).

In the game of testing by betting, the online learner usually starts with an initial wealth of $W_1 = 1$ and makes sequential decisions over time. At each round $t$, prior to observing an observation $g_t$, the learner selects a decision point $\theta_t \in \K$, where $\K$ denotes the learner's decision space.  The learner's wealth dynamic
of the difference-in-means testing is governed by the following update rule:
\begin{equation} \label{scenario-1b}
W_{t+1} = W_t \cdot \left(1 - \theta_t  (x_t - y_t) \right).
\end{equation}
One can interpret the magnitude of $\theta_t$ as the amount of the wealth that the online learner bets on the outcome $x_t - y_t$. The high-level idea of testing by betting is that when the wealth $W_t$ surpasses a certain threshold, i.e., when $W_t \geq \frac{1}{\alpha}$, where $\alpha > 0$ is a parameter, the null hypothesis $\mathcal{H}_0$ can be rejected with high confidence. However, to ensure this strategy enjoy strong statistical guarantees, the common algorithmic design principle in the literature is to let the wealth $(W_t)_{t\geq1}$ be a nonnegative supermartingale when $\mathcal{H}_0$ is true.
Then, one can use Ville's inequality~\cite{Ville1939} to control the false positive rate.
More precisely, Ville's inequality states that if $(W_t)_{t\geq 1}$ is a nonnegative supermartingale, then $P(\exists t : W_t \geq {1}/{\alpha}) \leq \alpha \mathbb{E}[W_1]$. 
Hence, with the initial wealth $W_1=1$ and an appropriate choice of the learner's constraint set $\K$ to make sure $W_t$ is nonnegative, the Type-I error at any stopping time can be bounded by $\alpha$. Several previous works simply let the decision space $\K$ be $\K=[-1/2, 1/2]$, e.g., \cite{pandeva2024deep,shekhar2023nonparametric,podkopaev2024sequential,podkopaev2023sequential,waudby2024estimating,teneggi2024bet,Chugg2023,CW2025}.
However, as we shall elucidate and underscore soon, the selection of the decision space in much of the related literature is potentially conservative, thereby leaving considerable room for algorithmic improvements in conjunction with novel update schemes.

When the alternative $\mathcal{H}_1$ is true, on the other hand, one would wish to quickly reject the null hypothesis $\mathcal{H}_0$. This motivates the idea of instantiating the bettor as an algorithm in online learning (a.k.a.~no-regret learning), see e.g., \cite{Hazan2016OnlineConvexOptimization,orabona2019modern}.
In online learning, the learner aims to obtain a sublinear regret (w.r.t.~the number of rounds $T$): $ 
\mathrm{Regret}_T(\theta_*) := \sum_{t=1}^T \ell_t(\theta_t) - \sum_{t=1}^T \ell_t(\theta_*)$,
where $\theta_* \in \mathcal{K}$ is a comparator in the learner's decision space $\K$.
Let $g_t:= x_t - y_t$.
 An observation is that if we define 
$\ell_t(\theta_t):= - \ln (1-g_t \theta_t)$, then there is a nice relation between \emph{regret minimization} and \emph{wealth maximization} in the betting game. In particular, we have $
\ln ( W_T) = \sum_{t=1}^T  \ln (1-g_t \theta_t) = - \sum_{t=1}^T \ell_t(\theta_t).$
Hence, a smaller regret bound translates to a faster growth of the learner's wealth $W_T$, which in turn can lead to a shorter time to reject the null when the alternative holds.
Existing works of testing by betting such as \cite{pandeva2024deep,shekhar2023nonparametric,podkopaev2024sequential,podkopaev2023sequential,waudby2024estimating,teneggi2024bet,Chugg2023,CW2025} adopt Online Newton Steps (ONS)
\cite{hazan2007logarithmic} (replicated in Algorithm~\ref{alg:ONS1})
to update $\theta_t$ in the decision space $\K=[-1/2, 1/2]$.
Such a \emph{heuristic} choice of the decision space $\K$ arises from the fact that if the learner's action $\theta_t$ is allowed to be $\K=[-1,1]$, there is a chance  that the learner's loss will explode, as the value of $1-g_t \theta_t$ could be (close to) $0$. Yet, the online learner has to determine its action $\theta_t$ \emph{before} observing $g_t$. 
Therefore, the conservative decision space is adopted in these works.

The limitation in prior works lies in the fact that when $g_t$ is relatively small, the learner could benefit from allocating a larger magnitude of $\theta_t$ to accelerate the growth of their wealth, which in turn helps reduce the time needed to reject the null hypothesis when the alternative $\H_1$ holds. For example, if the learner adopts a more aggressive betting strategy by selecting $\theta_t = 1$, and the outcome turns out to be $g_t = -0.1$, they would undoubtedly gain a larger fortune compared to being restricted to $\theta_t = \frac{1}{2}$. While this idea is intuitive, the challenge lies in designing an update strategy on the largest possible decision space $\K = [-1, 1]$ such that a larger bet can be made at some points while avoiding the risk of loss explosion. 
In other words, \emph{can we design a more sample-efficient testing-by-betting algorithm than ONS that provides strong statistical guarantees while enjoying a closed-form, lightweight update as ONS?}
Our work seeks to provide a solution to this question and tackle the common issue in prior works of ONS.

We will leverage the techniques of interior-point methods in optimization literature \cite{nesterov1994interior,nemirovski2004interior,wright1997primal} to avoid the loss explosion issue and a vacuous regret, while allowing a large decision space in the betting game. 
In particular, we will use the toolkit of self-concordant barrier functions and propose two novel methods for testing by betting. We note that the toolkit of self-concordant barrier functions have found powerful for getting provable fast convergence of the Newton's method in optimization \cite{nemirovski2008interior}, designing efficient bandit online learning algorithms \cite{abernethy2012interior}, and providing non-trivial non-asymptotic analyses for logistic regression \cite{bach2010self}. However, to our knowledge, this is the first time that this technique has been incorporated in the area of sequential hypothesis testing via betting.

To give the reader a flavor of our contributions, we provide a summary of our results here. First, we highlight a potential limitation that was overlooked in the literature of sequential hypothesis testing by betting
and propose Follow-the-Regularized-Leader (FTRL) \cite{hazan2010extracting,abernethy2012interior} with a barrier function as the regularization that tailored to the betting game. We show that our proposed method is an anytime-valid level-$\alpha$ sequential hypothesis test with asymptotic power of 1. Moreover, we identify a key condition with concrete examples where the proposed method achieves a shorter expected time to reject the null hypothesis $\mathcal{H}_0$ under the alternative $\H_1$, compared to Online Newton Steps (ONS), which has been adopted in many prior works. Furthermore, the proposed method has a \emph{closed-form update} per iteration like ONS; therefore, the implementation cost is low.
Our simulations also confirm the theoretical results. 
Second, we also incorporate the idea of \emph{optimistic} online learning \cite{rakhlin2013online,syrgkanis2015fast,wang2018acceleration} to propose a variant, which we denote as Optimistic-FTRL + Barrier.
When the sequence of samples becomes ``predictive'', the variant aims to exploit it to achieve faster rejection time for $\mathcal{H}_0$, compared to its counterpart without the optimistic learning mechanism.
We further conduct simulations to evaluate the proposed methods against existing methods. The code to reproduce these results is provided in 
this link \url{https://github.com/jimwang123/Interior_Point_Methods_for_Sequential_Hypothesis_Testing.git}.

\section{Preliminaries}

In this section, we provide additional necessary background and notations before introducing our algorithms in the following sections.

\noindent
\subsection{Definition: Level-$\alpha$ sequential hypothesis test with asymptotic power one}
Consider a sequence of observations $\{Z_i: i \geq 1\}$ and let $\mathcal{F} = (\mathcal{F}_t)_{t \geq 0}$ be the forward filtration where each $\mathcal{F}_t = \sigma(Z_1, \ldots, Z_t)$ captures the information available up to time $t$. For any process $W := (W_t)_{t \geq 1}$ adapted to this filtration, we say $W$ is a $\P$-martingale if it satisfies
$\E_{\P}[W_{t} | \mathcal{F}_{t-1}] = W_{t-1}$ for all $t \geq 1,$
and a $\P$-supermartingale if the equality is replaced by an inequality:
$\E_{\P}[W_{t} | \mathcal{F}_{t-1}] \leq W_{t-1}, \forall t \geq 1.$
Let a binary-valued variable $\I_{t} \in \{0,1\}$ be an indicator of the rejection of the null hypothesis $H_0$ at a stopping time $t$. A sequential hypothesis test constructed from a martingale process $W$ maintains its level-$\alpha$ test if
$\sup_{\P \in H_0} P\left(\exists t \geq 1: \I_{t} = 1 \right) \leq \alpha$.
On the other hand, a sequential test achieves asymptotic power $1-\beta$ if 
$\sup_{\P \in H_1} P\left(\forall  t \geq 1: \I_{t} = 0 \right) \leq \beta$.
Of particular interest is the case where $\beta = 0$, which corresponds to asymptotic power one. Specifically, this guarantees that under $H_1$, the underlying sequential hypothesis testing method will eventually reject the null hypothesis $H_0$. As we will demonstrate, the proposed two new algorithms in this work are both level-$\alpha$ tests with asymptotic power one.

\noindent
\subsection{Definition: Self-concordant functions}
A \textit{self-concordant function} $R(\cdot):\text{int}(\mathcal{K}) \rightarrow \mathbb{R}$ is a thrice continuously differentiable convex function such that for all $h \in \mathbb{R}^d$ and $\theta \in \text{int}(\mathcal{K})$, 
\begin{equation}
    \left| D^3 R(\theta)[h, h, h] \right| \leq 2 \left( D^2 R(\theta)[h, h] \right)^{3/2}
    \label{eq:self_concordant},
\end{equation}
where $D^3 R(\theta)[h, h, h]$ is the third-order differential, i.e.,
\begin{equation}
 D^3 R(\theta)[q, r, s] := \frac{\partial^3}{\partial \delta_1 \partial \delta_2 \partial \delta_3} |_{\delta_1=\delta_2=\delta_3=0} R(\theta+ \delta_1 q + \delta_2 r  + \delta_3 s  )
\end{equation}
  is the third-order differential taken at $\theta$ along the directions $q,r,s$; and similarly, $D^2 R(\theta)[h, h]$ is the second-order differential.
Given a self-concordant function $R(\cdot)$, for any point $\theta \in\text{int}(\K)$, we can define a norm $\| \cdot \|_{\theta}$ and its dual norm $\| \cdot \|_{\theta}^*$ as:
\begin{equation}
\| h \|_{\theta} := \sqrt{ h^\top \nabla^2 R(\theta) h } \, \text{ and } \,
\| h \|_{\theta}^* := \sqrt{ h^\top \nabla^{-2} R(\theta) h },
\end{equation}
where $\nabla^{-2} R(\theta) := (\nabla^2 R(\theta) )^{-1}$ is the inverse of the Hessian.
We refer the reader to \cite{nesterov1994interior,nemirovski2008interior,nemirovski2004interior} for the exposition of self-concordant functions. 

\noindent
\subsection{Sequential Hypothesis Testing by Betting}

The notion of testing by betting can be traced back to \cite{cover1974universal} 
and \cite{robbins1974expected}, as wells as \cite{kelly1956new}, who shows that in a repeated game with a binary outcome and certain odds, a gambler can grow their fortune exponentially fast. However, many algorithmic and theoretical foundations have only developed over the recent years, and we refer the reader to a nice monograph by \cite{ramdas2024hypothesis} for the exposition. We note that the wealth in testing by betting  described above is an e-value in the modern statistics literature, see e.g., \cite{vovk2021values,ramdas2024hypothesis,grunwald2024authors,wasserman2020universal,ramdas2020admissible}. 
The idea of betting has also facilitated the design and analysis of algorithms in other domains such as portfolio selection \cite{orabona2023tight} (see also  Subsection~\ref{sub:related_works_on_online_portfolio_selection_} in the following), the construction of confidence sequences \cite{jang2023tighter}, and the design of parameter-free optimization algorithms \cite{orabona2016coin}.
We refer the reader to the references therein for more details.

We now provide more background on the mechanism of sequential hypothesis testing by betting.  
To showcase the flexibility of \emph{testing by betting},
in addition to the example of difference-in-means testing described in the introduction,  
we will also consider a scenario in which one receives a sequence of non-negative samples $
x_t \in [0,1]$.
Formally, the hypothesis testing task is:
\begin{mdframed}
\begin{equation} \label{scenario-2a}
\textbf{(One-sided testing):}
\qquad 
\H_0: \mu_x \leq \mu_0, \quad   \text{versus} \quad  \H_1: \mu_x > \mu_0,
\end{equation}
\end{mdframed}
with the corresponding wealth dynamic being:
\begin{equation} \label{scenario-2b}
W_{t+1} = W_t \cdot \left(1 - \theta_t (\mu_0-x_t) \right),
\end{equation}
where $\theta_t \in [0,1]$ and $\mu_0 \in [0,1]$ is user-specified or given.
We note that in both the \emph{difference-in-means} testing and \emph{one-sided} testing, the wealth dynamic can be written as 
\begin{equation} \label{b}
W_{t+1} = W_t \cdot \left(1 - \theta_t g_t \right),
\end{equation}
where $g_t = x_t - y_t$ in the difference-in-means testing and $g_t = \mu_0 - x_t$ in the one-sided testing. 
The decision space $\K$ in each case will be determined by requiring that $\{W_t\}_{t \geq 1}$ forms a non-negative supermartingale (to be elaborated soon).

The meta-algorithm that forms the basis of several prior works is detailed in Algorithm~\ref{alg:Betting} \cite{pandeva2024deep,shekhar2023nonparametric,podkopaev2024sequential,podkopaev2023sequential,waudby2024estimating,teneggi2024bet,Chugg2023,CW2025}. 
At each round $t$, Algorithm~\ref{alg:Betting} selects a point $\theta_t$ based on an online learning algorithm $\mathrm{OAlg}$.
Subsequently, it observes the samples $x_t$ and $y_t$, and hence it sees the loss function $\ell_t(\cdot)$ as well. The bettor's wealth is then updated according to the dynamics $W_{t+1} = W_t \left(1 - g_t \theta_t \right)$. Then, $\mathrm{OAlg}$ updates the next action $\theta_{t+1}$ by using $\ell_t(\cdot)$ and potentially the history of past ones. 
If the wealth $W_t$ exceeds $1/\alpha$, it declares that the null hypothesis $\H_0$ is false (Line 8). Moreover, if there is a time budget (i.e., $T < \infty$) and the timer runs out, it may reject $H_0$ under a condition (Line 11-13). For $\mathrm{OAlg}$, the aforementioned works 
\cite{pandeva2024deep,shekhar2023nonparametric,podkopaev2024sequential,podkopaev2023sequential,waudby2024estimating,teneggi2024bet,Chugg2023,CW2025} 
adopt ONS (see Algorithm~\ref{alg:ONS1} for the replication). For its comparison to our methods in the next section, we provide the regret bound guarantee of ONS for the betting game in Lemma~\ref{lem:ONS} below, which is a known result in the literature (e.g., proof of Lemma~1 in \cite{CW2025}).

\begin{algorithm}[t]
\caption{Sequential Hypothesis Testing by Betting}
\label{alg:Betting}
\begin{algorithmic}[1] 
\small
\STATE\textbf{Init:} wealth $W_1 \gets 1$, significance level parameter $\alpha \in (0, 1)$, and time budget $T \in [1,\infty]$.
\STATE\textbf{Input:} online learning algorithm $\mathrm{OAlg}$.
\STATE \textbf{Specify:} the decision space $\K$ to 
guarantee that $(w_t)_{t\geq 0}$ is non-negative supermartingale. 
\STATE\textbf{For} $t=1,2,\dots, T$ \textbf{do}
\STATE\quad Play $\theta_t \in \K$ by $\mathrm{OAlg}$.
\STATE\quad Observe a sample $g_t$.
\STATE\quad Update learner's wealth $W_{t+1} = W_t \left(1 - g_t \theta_t \right)$.
\STATE\quad \textbf{If} $W_t \geq 1/\alpha$, \textbf{then} reject the null $\H_0$.
\STATE\quad Send $\ell_t(\cdot) \to \mathrm{OAlg}$, where $\ell_t(\theta_t)= - \ln (1-g_t \theta_t)$.
\STATE\textbf{End For}
\STATE\textbf{If} the null $\H_0$ has not been rejected, \textbf{then}
\STATE \quad Sample $\nu \sim \text{Uniform}[0, 1]$. 
\STATE\quad \textbf{If} $W_T \geq \nu/\alpha$, \textbf{then} reject $\H_0$.
\end{algorithmic}
\end{algorithm}

\begin{mdframed}[backgroundcolor=black!10,rightline=false,leftline=false,topline=false,bottomline=false]
\begin{lemma} \label{lem:ONS} Consider the scenario of bounded random variables $x_t \in [0,1]$ 
and $y_t \in [0,1]$ in the betting game. Online Newton Steps (ONS) (Algorithm~\ref{alg:ONS1}) has
\begin{equation}
   \text{Regret}_T(\theta_*)
 \lesssim \ln\left(\sum_{t=1}^T{g_t^2}\right).
\end{equation}
\end{lemma}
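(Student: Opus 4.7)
The plan is to follow the standard Online Newton Step analysis specialized to the one-dimensional decision set $\K=[-1/2,1/2]$ with log-loss $\ell_t(\theta)=-\ln(1-g_t\theta)$. The first step is to establish an exp-concave-style quadratic lower bound on the per-round excess loss. I would introduce $z_t := b_t(\theta_*-\theta_t) = g_t(\theta_*-\theta_t)/(1-g_t\theta_t)$; a direct substitution yields $1-z_t = (1-g_t\theta_*)/(1-g_t\theta_t)$, whence $\ell_t(\theta_*)-\ell_t(\theta_t) = -\ln(1-z_t)$. Since $\theta_t,\theta_*\in[-1/2,1/2]$ and $|g_t|\leq 1$, both $1-g_t\theta_t$ and $1-g_t\theta_*$ lie in $[1/2,3/2]$, so $z_t\in[-2,\,2/3]$. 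A one-variable calculus check on this interval gives $-\ln(1-z)\geq z+\tfrac{\kappa}{2}z^2$ with $\kappa:=(2-\ln 3)/2$, the binding case being $z=-2$; notice that $1/\kappa$ is exactly the step size appearing in Algorithm~\ref{alg:ONS}. Rearranging yields
\[
\ell_t(\theta_t)-\ell_t(\theta_*) \;\leq\; b_t(\theta_t-\theta_*) - \tfrac{\kappa}{2}b_t^2(\theta_t-\theta_*)^2.
\]

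Next, I would invoke the one-step ONS progress inequality. The pre-projection update $\tilde\theta_{t+1}=\theta_t-(1/\kappa)b_t/a_t$ followed by Euclidean projection onto $\K\ni\theta_*$ satisfies $(\theta_{t+1}-\theta_*)^2\leq(\tilde\theta_{t+1}-\theta_*)^2$; expanding the square and rearranging gives
\[
b_t(\theta_t-\theta_*) \;\leq\; \tfrac{\kappa a_t}{2}\bigl[(\theta_t-\theta_*)^2-(\theta_{t+1}-\theta_*)^2\bigr] + \tfrac{1}{2\kappa}\tfrac{b_t^2}{a_t}.
\]
Substituting this into the previous display and using $a_t-a_{t-1}=b_t^2$, the $b_t^2(\theta_t-\theta_*)^2$ terms cancel in Bregman-telescoping fashion, so
\[
\sum_{t=1}^T [\ell_t(\theta_t)-\ell_t(\theta_*)] \;\leq\; \tfrac{\kappa}{2}a_0(\theta_1-\theta_*)^2 + \tfrac{1}{2\kappa}\sum_{t=1}^T \tfrac{b_t^2}{a_t} \;\leq\; \tfrac{\kappa}{2} + \tfrac{1}{2\kappa}\ln a_T,
\]
where the last step uses the textbook inequality $\sum_t b_t^2/a_t \leq \ln(a_T/a_0)$ (a consequence of $1-1/y\leq \ln y$ applied to $y=a_t/a_{t-1}$), combined with $a_0=1$ and $(\theta_1-\theta_*)^2\leq 1$.

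Finally, I would pass from $b_t$ back to $g_t$: because $|\theta_t|\leq 1/2$ and $|g_t|\leq 1$ imply $|1-g_t\theta_t|\geq 1/2$, we get $b_t^2\leq 4g_t^2$, hence $a_T\leq 1+4\sum_{t=1}^T g_t^2$, giving the claimed bound $\mathrm{Regret}_T(\theta_*)\lesssim \ln\!\bigl(\sum_{t=1}^T g_t^2\bigr)$. The only genuinely delicate step is the first: identifying the precise range of $z_t$ forced by boundedness and pinning down the sharpest constant $\kappa$ for which $-\ln(1-z)\geq z+(\kappa/2)z^2$ holds on that range. Everything after that is standard ONS machinery and elementary algebra.
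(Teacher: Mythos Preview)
Your proof is correct and follows exactly the standard ONS analysis for log-loss on the halved interval. The paper does not actually supply its own proof of this lemma; it states the result and defers to the proof of Lemma~1 in \cite{chen2024online}. Your argument recovers precisely that analysis: the identification $\kappa=(2-\ln 3)/2$ matches the step size $2/(2-\ln 3)$ hard-coded in Algorithm~\ref{alg:ONS}, and the exp-concavity inequality $-\ln(1-z)\geq z+\tfrac{\kappa}{2}z^2$ on $z\in[-2,2/3]$ with equality at $z=-2$ is the key constant-pinning step. Everything else---the projection inequality, the $a_t$-weighted telescoping, the log-sum bound $\sum_t b_t^2/a_t\leq\ln a_T$, and the final $b_t^2\leq 4g_t^2$---is textbook.
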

\end{mdframed}
Lemma~\ref{lem:ONS} implies that ONS has an $O(\ln(T))$ regret.

Now we highlight the connection of online learning and sequential hypothesis testing in the following theorem.
Recall that an algorithm is a no-regret learning algorithms if its regret is sublinear with the number of rounds $T$, i.e., $\frac{\mathrm{Regret}_T(\theta_*)}{T} \to 0 \text{ as } T \to \infty$ \cite{orabona2019modern,wang2024no}, and hence a no-regret learner has a vanishing average regret.
Theorem~\ref{thm:1} below shows that Algorithm~\ref{alg:Betting} with $\mathrm{OAlg}$ being any no-regret learning algorithms has strong statistical guarantees,
provided that the significance-level parameter $\alpha$ is not asymptotically close to $0$.
To our knowledge, Theorem~\ref{thm:1} has not been explicitly stated in prior literature. However, its result has indeed been implicitly proven through the analysis of ONS for betting in prior works (e.g., \cite{Chugg2023,shekhar2023nonparametric,dai2025individual,CW2025}) for the difference-in-means setting. We present Theorem~\ref{thm:1} to highlight the modularity of the approach of testing by no-regret learning, and we note that Theorem~\ref{thm:1} covers the prior result when ONS is used to instantiate $\mathrm{OAlg}$. 

In the following, we denote $\omega_*:= \E[ \ln (1 - g \theta_*)  ]$, where $\theta_* := \arg\max_{\theta \in \K } \E\left[ \ln (1 - g \theta) \right]$ is the comparator that maximizes the expected wealth in a single round.

\begin{mdframed}[backgroundcolor=black!10,rightline=false,leftline=false,topline=false,bottomline=false]
\begin{theorem} \label{thm:1}
Assume $(g_t)_{t\geq 1}$ are i.i.d.~random variables
and that $\psi_t:= \ln ( 1 - g_t \theta_*) - \E[  \ln (1-g_t \theta_*) ]$ is a bounded random variable with sub-Gaussian parameter $\sigma>0$.
Algorithm~\ref{alg:Betting} with $\mathrm{OAlg}$ being a no-regret learning algorithm is a level-$\alpha$ sequential test with asymptotic power one
for both the difference-in-mean testing and 
the one-sided testing if the regret of the underlying 
 $\mathrm{OAlg}$ and the parameter $\alpha$ satisfy
$ \frac{\mathrm{Regret}_T(\theta_*)}{T}  
\leq \frac{\omega_*}{2} - \frac{ \ln \left( 1 / \alpha \right)  }{T}$ as $T \to \infty$.
Furthermore, denote $t_*$ the time such that
for all $t \geq t_*$, we have
$\mathrm{Regret}_t(\theta_*) \leq \frac{\omega_* t}{2} - \ln \left(1/\alpha\right)$. 
Then, the expected time to reject $H_0$ is
$$ \E[ \tau ] \leq t_* + \frac{2 \sigma^2}{ \omega_*^2 }.$$
\end{theorem}
\end{mdframed}

Theorem~\ref{thm:1} shows that as long as the significance level parameter $\alpha$ satisfies $\frac{\ln \left( 1 / \alpha \right) }{T} < \frac{\omega_*}{2}$ as $T \to \infty$, the use of any 
no-regret learning algorithm leads to a sequential hypothesis testing method with the strong statistical guarantees. Furthermore, a no-regret learning algorithm with a smaller regret bound yields a smaller bound on the expected time to reject the null hypothesis $H_0$. The proof of the theorem is given below, and some of the ideas in the analysis follow from \cite{CW2025,dai2025individual}, where they only consider either the difference-in-means testing or one-sided testing with ONS as the underlying betting method.

\begin{proof}(of Theorem~\ref{thm:1})

\noindent
\textbf{Level-$\alpha$ Sequential Test.}
We first show that Algorithm~\ref{alg:Betting}
is a valid level-$\alpha$ sequential test.
Let us consider the difference-in-mean testing,
where we recall that $\theta_t \in \K \subseteq [-1,1]$.
Under $H_0$, we have
\begin{align}
    \mathbb{E}[W_t | \mathcal{F}_{t-1}] = \mathbb{E} \left[ (1 - \theta_t g_t)\times W_{t-1} \Bigg| \mathcal{F}_{t-1} \right] =
    \mathbb{E}\left[ W_{t-1} \cdot (1 - \theta_t\cdot (x_t - y_t ) )\Bigg| \mathcal{F}_{t-1} \right] 
    = W_{t-1},
\end{align}
where the last equality is because $\theta_t$ is fully determined 
by the realization of $\mathcal{F}_{t-1}$, and hence, given $\mathcal{F}_{t-1}$,  $\theta_t$ is independent of $x_t - y_t \in [-1,1]$, where the latter 
has conditional expectation zero under $H_0: \mu_x = \mu_y$. We also note that $W_t$ is nonnegative for all $t \geq 1$.

Similarly, for the one-sided testing,
we have $\theta_t \in K \subseteq [0,1]$,
and hence
\begin{align}
    \mathbb{E}[W_t | \mathcal{F}_{t-1}] = \mathbb{E} \left[ (1 - \theta_t g_t)\times W_{t-1} \Bigg| \mathcal{F}_{t-1} \right] =
    \mathbb{E}\left[ W_{t-1} \cdot (1 - \theta_t\cdot (\mu_0 - x_t ) )\Bigg| \mathcal{F}_{t-1} \right] 
    \leq W_{t-1},
\end{align}
where the last equality is because $\theta_t$ is fully determined 
by the realization of $\mathcal{F}_{t-1}$ and is independent of $\mu_0 - x_t$, where the latter 
has conditional expectation a nonnegative number under $H_0: \mu_x \leq \mu_0$. We also note that $W_t$ is nonnegative for all $t \geq 1$,
since $1 - \theta_t (\mu_0 - x_t) \geq 0$.

Therefore, $(W_t)_{t\geq 1}$ is a non-negative supermartingale in both cases.
We can apply Ville's inequality~\cite{Ville1939}
to establish that $P(\exists t \geq 1 : W_t \geq 1/\alpha) \leq \alpha$,
which shows that rejecting $H_0$ once the wealth $W_t$ reaches $1/\alpha$ controls the type-I error at level $\alpha$. Furthermore, if there exists a time budget $T$ and that $H_0$ has not been rejected, we 
can use randomized Ville's inequality \cite{RamdasManole2023} to obtain the guarantee of being a level-$\alpha$ test as well.

\noindent
\textbf{Asymptotic Power One.} 
We now show that the meta algorithm has asymptotic power $1$.

We first connect the regret and the wealth in the testing-by-betting game in the following way: 
\begin{equation} \label{eq:second-repeat-1}
    \ln(W_t) = \ln(W_t(\theta_*)) - \text{Regret}_t(\theta_*) ,
\end{equation}
where $W_t$ is the learner's wealth at round $t$, $\theta_* \in \K$ is the benchmark, and 
$W_t(\theta_*) = \Pi_{s=1}^t (1 - g_s \theta_* )$ is the wealth of the benchmark at the end of round $t$. 
Recall that $\omega_*:= \E[ \ln (1- g \theta_*) ]$ is the expected one-round wealth of the benchmark. 
Taking the expectation on both sides of \eqref{eq:second-repeat-1},
we have
\begin{align*}
\E[ \ln ( W_t  ) ] & = \E[  \ln (W_t(\theta_*)) ]
- \E\left[ \text{Regret}_t(\theta_*)   \right]
\\ & = \E\left[  \sum_{s=1}^t \ln (1 - g_s \theta_*)  \right] - \E[ \text{Regret}_t(\theta_*)   ]
\\ & = t \omega_*  - \E\left[ \text{Regret}_t(\theta_*)   \right],
\end{align*}
where the last equality holds by the assumption that the random variables $(g_s)_{s\geq1 }$ are i.i.d. 

We now analyze the probability that $H_0$ has not been rejected by time $t$, i.e., when the event
$\{ W_t< \frac{1}{\alpha}  \}$ holds for any $s \leq t$. 
We have
\begin{align}
 \mathbb{P}\left[ W_t < \frac{1}{\alpha} \right] 
 & = \mathbb{P}\left[  \ln (W_t) < \ln \left( \frac{1}{\alpha} \right) \right] \notag
\\ & = \mathbb{P}\left[  \ln (W_t) - \E[ \ln (W_t(\theta_*)) ]   
 < \ln\left( \frac{1}{\alpha} \right) - \E[ \ln (W_t(\theta_*)) ] 
 \right] \notag
\\ & = \mathbb{P}\left[  \ln(W_t(\theta_*)) - \text{Regret}_t(\theta_*)  - \E[ \ln (W_t(\theta_*)) ]  
 < \ln\left( \frac{1}{\alpha} \right) - \E[ \ln (W_t(\theta_*)) ] 
 \right] \notag
\\ & = \mathbb{P}\left[  \ln(W_t(\theta_*))  - \E[ \ln (W_t(\theta_*)) ]  
 <
 \text{Regret}_t(\theta_*) +
 \ln\left( \frac{1}{\alpha} \right) - t \omega_* 
 \right].  \label{intermediate-a-1}
\end{align}

We now are ready to show that Algorithm~\ref{alg:Betting} has asymptotic power $1$.
Let $t_*$ be the time such that $\forall t \geq t_*$
$\text{Regret}_t(\theta_*)  \leq \frac{t \omega_*}{2} -  \ln\left( \frac{1}{\alpha} \right).$
Then, from \eqref{intermediate-a-1}, we will have for all $t \geq t_*$:
\begin{align}
\mathbb{P}\left[ W_t < \frac{1}{\alpha} \right]
\leq  
\mathbb{P}\left[ 
 \ln(W_t(\theta_*))  - \E[ \ln (W_t(\theta_*)) ]   
  < - \frac{t}{2} \omega_* 
 \right]. \label{2-12}
\end{align} 

Now let's denote $\psi_t:= \ln ( 1 - g_t \theta_*) - \E[  \ln (1-g_t \theta_*) ]$.
We note that given the assumption that $\psi_t$ is a bounded random variable, $\psi_t$ is a sub-Gaussian with parameter $\sigma$.
Moreover, we have 
$\ln(W_t(\theta_*))  - \E[ \ln (W_t(\theta_*)) ]= \sum_{s=1}^t \psi_s$,
which is a sum of zero-mean i.i.d bounded random variables. 
By Hoeffding's inequality, we have
\begin{equation}
\mathbb{P}\left[  
\frac{1}{t} \sum_{s=1}^t \psi_s \leq - c \right]\leq \exp\left( - \frac{ t c^2 }{ 2\sigma^2 }  \right)     , \label{hoeff2}
\end{equation}
for any constant $c>0$.
Then,
\begin{align}
 \mathbb{P}\left[  \ln(W_t(\theta_*))  - \E[ \ln (W_t(\theta_*)) ]  
 <  - \frac{t}{2} \omega_* 
 \right]
 = 
 \mathbb{P}\left[  \frac{1}{t} \sum_{s=1}^t \psi_s 
 <  - \frac{1}{2} \omega_* 
 \right]
\leq \exp \left(  - \frac{\omega_*^2}{ 2 \sigma^2 } t      \right), \label{2-14}
\end{align}
where we let $c \gets \frac{\omega_*}{2}$ 
in \eqref{hoeff2}.
Combining \eqref{2-12} and \eqref{2-14} leads to 
 \begin{align} \label{ea1a}
 \mathbb{P}\left[ W_t < \frac{1}{\alpha} \right] 
\leq 
\exp \left(  - \frac{\omega_*^2}{ 2 \sigma^2 } t      \right), \forall t \geq t_*.
 \end{align}
Let $H_t:= \{W_t \geq \frac{1}{\alpha} \}$.
Then, 
\begin{align}
\mathbb{P}[ t = \infty ] 
& =
\mathbb{P}\left[ \lim_{t \to \infty} \cap_{s \leq t} \neg H_s \right] \notag
\\ & =  \lim_{t \to \infty}
\mathbb{P}\left[ \cap_{s \leq t} \neg H_s \right] \notag
\\ & \leq 
 \lim_{t \to \infty}
\mathbb{P}\left[  \neg H_t \right] \notag
\\ & = 
 \lim_{t \to \infty}
\mathbb{P}\left[  
W_t < \frac{1}{\alpha} 
\right] \notag
\\ & \leq
\lim_{t \to \infty}
 \exp \left(  - \frac{\omega_*^2}{ 2 \sigma^2 } t      \right) \notag
\\ & = 0,
\end{align}
where the last equality is by \eqref{ea1a}, which shows that the asymptotic power is one.

\noindent
\textbf{Expected Rejection Time $\E[\tau]$.} 

Now we analyze the expected stopping time.
We have
\begin{align}
\E[\tau] & = \sum_{t=1}^{\infty} \mathbb{P}\left[ \tau > t \right] \notag
\\ & = \sum_{t=1}^{\infty}  \mathbb{P}\left[  \cap_{s \leq t} \neg H_s \right] \notag
\\ & \leq \sum_{t=1}^{\infty}  \mathbb{P}\left[  \neg H_t \right] \notag
\\ & =  \sum_{t=1}^{\infty}  \mathbb{P}\left[ 
W_t < \frac{1}{\alpha} \right] \notag
\\ & \leq t_* + \sum_{t=1}^{\infty}
 \exp \left(  - \frac{\omega_*^2}{ 2 \sigma^2 } t      \right) \notag
\\ & \leq t_* + \frac{1}{ \exp \left( \frac{ \omega_*^2}{ 2 \sigma^2 } t  \right) - 1  } \notag
\\ & \leq t_* + \frac{2 \sigma^2}{ \omega_*^2 } , \label{t-nonasymp2}
\end{align}
where we used $\exp(z) \geq 1 + z$ for any $z \geq 0$ for the last inequality.

\end{proof}

We note that the machinery can be naturally extended to the case where the distributions generating $x_t$ and $y_t$ are ever-changing without changing the algorithm with one modification of the testing task:
\begin{equation} \label{setting:distributionshifts}
\H_0: \mu_x(t) = \mu_y(t), \forall t \, \, \text{v.s.} \, \, \H_1: \exists t \geq 1: \mu_x(t) \neq \mu_y(t),
\end{equation}
where $\mu_x(t) := \E[x| \mathcal{F}_{t-1}]$ (and $\mu_y(t)$ similarly defined).
We refer the reader to Section 3.4 in \cite{Chugg2023} for the nice treatment of handling distribution shifts.

\subsection{Related works on Online Portfolio Selection} \label{sub:related_works_on_online_portfolio_selection_}

Online Portfolio Selection (OPS) concerns a sequential investment with the goal oftentimes being competing with
the best constantly re-balanced portfolio,
see e.g., Chapter 13 of \cite{orabona2019modern} for a nice exposition.
The wealth dynamics in OPS are typically defined as: 
\begin{equation} \label{a}
    W_{t+1} = W_t \cdot \langle \lambda_t, r_t \rangle,
\end{equation}
with $\lambda_t \in \Delta$ (the simplex of asset weights) and the vector of price relatives $r_t \in \mathbb{R}^d_+$, where $\Delta:=\{\lambda: \lambda\succ0, \lambda^T1=1\}$ \cite{cover1991universal,agarwal2006algorithms}. 
Early works 
has established that the regret bound of universal portfolios is $\mathcal{O}(d\ln T)$, where $d$ is the number of assets, and $T$ is the number of rounds \cite{cover1991universal}. Since then, there have been persistent interest in OPS on various aspects over the past couple of decades \cite{
cover1991universal, cover1996universal,
agarwal2006algorithms,hazan2007logarithmic, hazan2015online,ito2018regret,luo2018efficient,
van2020open,zimmert2022pushing,mhammedi2022damped, jezequel2022efficient,tsai2023data,orabona2023tight}. Recently, Orabona and Jun~\cite{orabona2023tight} introduced time-uniform concentration inequalities that leverage the regret guarantees of online portfolio algorithms for constructing valid confidence sequences,
which showcases the connection between OPS and testing by betting. Waudby-Smith et al.~\cite{waudby2025universal} further 
provide matching lower and upper bounds on the expected rejection time in sequential hypothesis testing by betting for certain regimes.

\section{Main results: Optimistic Interior Point Methods for Testing by Betting} 

We have shown that the meta-algorithm (Algorithm~\ref{alg:Betting}) equipped with a no-regret learning  algorithm can be a valid level-$\alpha$ test with asymptotic power one, together with a non-asymptotic bound on the rejection time.   
However, from a practical standpoint, one would like to quickly reject the null $\H_0$ when it is false.
This together with the issue of ONS mentioned in the introduction motivates the development of our two algorithms that we are ready to present.

As outlined in the introduction, our algorithms employ 
the technique of barrier functions to constrain the learner's updates to the interior of the domain $\K$. 
For the difference-in-means testing, a natural barrier function for this purpose is
\begin{equation} \label{barrier}
\textbf{For Difference-in-Means Testing (c.f.~\eqref{scenario-1a}) }: \quad 
R(\theta) = -\ln(1-\theta) - \ln (1+\theta),
\end{equation}
where we note that the domain of $R(\cdot)$ is $(-1,1)$.
However, one might wonder: how can this technique overcome the potential loss explosion issue when the learner plays $\theta_t$ such that $|\theta_t| \approx 1$ and still achieve a non-vacuous regret bound?
Recall that we have $\ell_t(\theta_t)= -\ln (1- g_t \theta_t)$, where $g_t:= x_t - y_t \in [-1,1]$ in the difference-in-means testing. The answer lies in exploiting the properties of self-concordant functions for designing no-regret learning strategies, which we detail next.

\subsection{FTRL+Barrier for Testing by Betting} \label{section:main1}

Our first sequential testing-by-betting algorithm is based on Follow-the-Regularized Leader (FTRL), a classical no-regret learning strategy in online learning \cite{shalev2006online,hazan2010extracting,abernethy2012interior,mcmahan2017survey}. The update is depicted in Algorithm~\ref{alg:FTRL}. 

We emphasize that the update $\theta_{t+1}$ of FTRL+Barrier (Algorithm~\ref{alg:FTRL}) has a closed-form solution in the test-by-betting game,
thanks to the use of the barrier function.
Therefore, FTRL+Barrier can be efficiently implemented as ONS, which also has a closed-form update.

\begin{mdframed}[backgroundcolor=black!10,rightline=false,leftline=false,topline=false,bottomline=false]
\begin{lemma} \label{lem:close-form}
Denote $G_t:= \sum_{s=1}^t \nabla \ell_s(\theta_s)$
the cumulative sum of the gradients.
In the difference-in-means testing, FTRL+Barrier
(Algorithm~\ref{alg:FTRL} with the barrier function $R(\cdot)$ defined in \eqref{barrier})
has a close-form update update:
$$
\theta_{t+1} = \begin{cases} & \frac{1 - \sqrt{1 + \left(\eta G_t\right)^2}}{\eta G_t} , 
\quad  \text{ if  } G_t \neq 0,
\\ & 0, \qquad \qquad \qquad  \text{otherwise}. 
\end{cases}
$$
\end{lemma}
\end{mdframed}

\begin{proof}
In the following, we denote $a:=\eta\sum_{s=1}^t \nabla \ell_s(\theta_s)$. Then,
the Lagrangian of the objective function for FTRL+Barrier (line 6 in Algorithm~\ref{alg:FTRL})
over the constraint set $\K= [-1,1]$ is $\mathcal{L}(\theta, \lambda_1, \lambda_2) = a\theta - \ln(1-\theta) - \ln(1+\theta) + \lambda_1(\theta-1) + \lambda_2(-\theta-1)$,
where $\lambda_1, \lambda_2 \geq 0$ 
are Lagrange multipliers.

The complementary slackness of the KKT optimality condition (see e.g., Section 5.5 in \cite{boyd2004convex}) is
 \begin{equation} \label{slackness}
\lambda_1(\theta - 1) = 0 \qquad \text{ and  } \qquad \lambda_2(-\theta - 1) = 0,
\end{equation}
while the stationarity of the KKT condition is 
\begin{equation}
\frac{\partial \mathcal{L}}{\partial \theta} = a + \frac{1}{1 - \theta} - \frac{1}{1 + \theta} + \lambda_1 - \lambda_2 = 0. \label{stationarity}
\end{equation}
Observe that the stationarity condition \eqref{stationarity} cannot be satisfied at $\theta = \pm 1$,
hence the solution $\theta \neq \pm 1$, which in turn implies that $\lambda_1=\lambda_2 = 0$ by 
the complementary slackness \eqref{slackness}.
Then, solving \eqref{stationarity} equivalently becomes solving a quadratic equation, i.e.,
$a\theta^2 - 2\theta - a = 0$,
which has two roots: $\theta = \frac{1 \pm \sqrt{1 + a^2}}{a}$ when $a \neq 0$. However, by the primal feasibility condition $\theta\in[-1,1]$, only $\theta = \frac{1 - \sqrt{1 + a^2}}{a}$ is the valid solution. 
When $a=0$, one can easily conclude that $\theta=0$ from \eqref{stationarity}.
We hence have completed the proof.
\end{proof}

\begin{algorithm}[t]
\caption{FTRL+Barrier as $\mathrm{OAlg}$ for Testing by Betting}
\label{alg:FTRL}
\begin{algorithmic}[1] 
\small
\STATE \textbf{Init:} $\theta_1 \in \K$.
\STATE \textbf{Require:} the parameter $\eta$. 
\STATE \textbf{Specify:} the decision space $\K$ and the barrier function $R(\theta)$.
\STATE\textbf{For} $t=1,2,\dots, T$ \textbf{do}
\STATE\quad Play $\theta_t \in \K$.
\STATE\quad Receive $\ell_t(\theta)= - \ln (1-g_t \theta)$ and incur loss $\ell_t(\theta_t)$.
\STATE \quad Update $\theta_{t+1} \gets \arg\min_{\theta \in \K} \eta
\langle \sum_{s=1}^t \nabla \ell_s(\theta_s), \theta \rangle + R(\theta).
  $
\STATE\textbf{End For}
\end{algorithmic}
\end{algorithm}
For one-sided testing, we propose the following barrier function:
\begin{equation} \label{barrier2}
\textbf{For One-Sided Testing (c.f.~\eqref{scenario-2a}):} \quad 
R(\theta) = -\ln (\theta) - \ln(1-\theta),
\end{equation}
which effectively constrains FTRL+Barrier to update in the interior of the decision space $\K = [0,1]$.
We note that in the one-sided testing, committing $\theta_t = 0$ would not cause the gradient explosion issue (c.f., the game dynamic in \eqref{scenario-2b}), while $\theta_t = 1$ potentially does. The term $\ln(\theta)$ is incorporated so that the update of FTRL+Barrier has a closed-form expression.

\begin{mdframed}[backgroundcolor=black!10,rightline=false,leftline=false,topline=false,bottomline=false]
\begin{lemma} \label{lem:close-form2}
Denote $G_t:= \sum_{s=1}^t \nabla \ell_s(\theta_s)$
the cumulative sum of the gradients.
In the one-sided testing, FTRL+Barrier
(Algorithm~\ref{alg:FTRL} with the barrier function $R(\cdot)$ defiend in \eqref{barrier2})
has a close-form update update:
$$
\theta_{t+1} = 
\begin{cases} & \frac{2 + \eta G_t - \sqrt{ 4 + (\eta G_t)^2 }}{2 \eta G_t}, 
\quad  \text{ if  }
G_t \neq 0,
\\ & \frac{1}{2}, \qquad \qquad \qquad \qquad   \text{otherwise}. 
\end{cases}
$$
\end{lemma}
\end{mdframed}

\begin{proof}
The proof can be derived in a similar fashion as Lemma~\ref{lem:close-form} and is available in Section~\ref{app:lem:close-form2}.
\end{proof}

We now switch to providing the theoretical guarantees of Algorithm~\ref{alg:Betting} with the proposed FTRL+Barrier (Algorithm~\ref{alg:FTRL}). To this end, we require a sequence of technical lemmas.

\begin{mdframed}[backgroundcolor=black!10,rightline=false,leftline=false,topline=false,bottomline=false]
\begin{lemma} \label{lem:barrier}
The barrier function $R(\theta) = -\ln (1-\theta) - \ln (1+\theta)$ in \eqref{barrier} is a self-concordant function for $\K = [-1,1]$, while $R(\theta) = - \ln (\theta) - \ln (1+\theta)$ in \eqref{barrier2} is a self-concordant function for $\K = [0,1]$.
Therefore, the objective function $F_t(\theta):=\left \langle \sum_{s=1}^t \nabla \ell_s(\theta_s), \theta \right  \rangle + R(\theta)$ in FTRL+Barrier is also a self-concordant function
over $\K$ in both cases.
\end{lemma}
\end{mdframed}

\begin{proof}
The proof is deferred to Appendix~\ref{app:lem:barrier}.
\end{proof}

Our result will build upon an established regret bound for FTRL with \emph{any} self-concordant barrier as the regularizer in the literature.

\begin{mdframed}[backgroundcolor=black!10,rightline=false,leftline=false,topline=false,bottomline=false]
\begin{lemma}[Theorem 4.1 in \cite{abernethy2012interior}]\label{lem:FTRL}
Suppose the learner faces a sequence of convex loss functions $\ell_t(\cdot)$ and that the regularization function of FTRL $R(\cdot)$ is a self-concordant barrier. Set the parameter $\eta$ so that $\eta \| \nabla \ell_t(\theta_t) \|_{\theta_t}^* \leq \frac{1}{4} $. Then, FTRL has $$\mathrm{Regret}_T(\theta_*)
\leq
2 \eta \sum_{t=1}^T \| \nabla \ell_t(\theta_t) \|_{\theta_t}^{*2}
+ \frac{  R(\theta_*) - \min_{\theta \in \K} R(\theta)  }{\eta},$$
where $\theta_* \in \mathcal{K}$ is any comparator.
\end{lemma}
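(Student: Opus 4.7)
The plan is to reduce the convex regret to a linear regret and then exploit the self-concordance of the regularizer to control the one-step drift of the FTRL iterates.

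First, I would invoke convexity of each $\ell_t$ to obtain $\ell_t(\theta_t) - \ell_t(\theta_*) \leq \langle \nabla \ell_t(\theta_t), \theta_t - \theta_*\rangle$, which reduces the task to bounding the regret against the linearized losses $\tilde\ell_t(\theta) := \langle \nabla \ell_t(\theta_t), \theta\rangle$. I would then apply the standard ``Be-the-Regularized-Leader'' lemma (a one-line induction on $t$) to the linear sequence, obtaining
\begin{equation*}
\sum_{t=1}^T \langle \nabla \ell_t(\theta_t), \theta_t - \theta_*\rangle \;\leq\; \sum_{t=1}^T \langle \nabla \ell_t(\theta_t), \theta_t - \theta_{t+1}\rangle + \frac{R(\theta_*) - R(\theta_1)}{\eta},
\end{equation*}
where $\theta_1 = \arg\min_{\theta \in \K} R(\theta)$ since no gradients have arrived before the first step, so $R(\theta_1) = \min_{\theta \in \K} R(\theta)$.

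Next, for each per-round stability term I would apply Cauchy--Schwarz in the local norm induced by $\nabla^2 R(\theta_t)$, giving $\langle \nabla \ell_t(\theta_t), \theta_t - \theta_{t+1}\rangle \leq \|\nabla \ell_t(\theta_t)\|_{\theta_t}^{*} \cdot \|\theta_t - \theta_{t+1}\|_{\theta_t}$. To bound $\|\theta_t - \theta_{t+1}\|_{\theta_t}$, I would regard $\theta_{t+1}$ as the unconstrained minimizer of the self-concordant potential $F_t(\theta) := \eta \langle \sum_{s=1}^t \nabla \ell_s(\theta_s), \theta\rangle + R(\theta)$ (unconstrained because $R$ is a barrier on $\K$). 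First-order optimality of $\theta_t$ for $F_{t-1}$ gives $\nabla F_t(\theta_t) = \eta \nabla \ell_t(\theta_t)$, while $\nabla^2 F_t = \nabla^2 R$, so the Newton decrement of $F_t$ at $\theta_t$ equals $\eta \|\nabla \ell_t(\theta_t)\|_{\theta_t}^{*}$, which is at most $1/4$ by hypothesis.

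The main technical step---and where I expect most of the effort---is a standard self-concordance estimate: if the Newton decrement $\lambda$ of a self-concordant function $F$ at a point $x$ is at most $1/4$, then the (unique) minimizer $x^{\star}$ of $F$ satisfies $\|x - x^{\star}\|_{x} \leq 2\lambda$. This follows from the Dikin ellipsoid inclusion, which shows $F$ is close to quadratic on the unit ball around $x$ in the local norm, combined with a one-dimensional integration argument along the Newton path (e.g. Theorem 4.1.13 in Nesterov, or Nemirovski's lecture notes). Applied to $F_t$ at $\theta_t$, this yields $\|\theta_t - \theta_{t+1}\|_{\theta_t} \leq 2\eta \|\nabla \ell_t(\theta_t)\|_{\theta_t}^{*}$, so each stability term is at most $2\eta \|\nabla \ell_t(\theta_t)\|_{\theta_t}^{*2}$. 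Substituting this back into the BTRL inequality produces the claimed bound, with the $R(\theta_1)$ term collapsing to $\min_{\theta \in \K} R(\theta)$. The only nontrivial ingredient is the self-concordance distance estimate; the remaining steps are routine FTRL bookkeeping.
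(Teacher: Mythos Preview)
Your proposal is correct and follows essentially the same route as the paper: reduce to linear losses by convexity, apply the Be-the-Regularized-Leader induction to obtain the stability-plus-regularizer decomposition, bound each $\langle \nabla\ell_t(\theta_t), \theta_t-\theta_{t+1}\rangle$ via the local-norm Cauchy--Schwarz, and then invoke the self-concordance Newton-decrement estimate (the paper cites it as Theorem~2.1 of \citet{abernethy2012interior}) to get $\|\theta_t-\theta_{t+1}\|_{\theta_t}\le 2\eta\|\nabla\ell_t(\theta_t)\|_{\theta_t}^*$. The only cosmetic difference is that the paper quotes the distance estimate as a black box rather than sketching its proof via Dikin-ellipsoid integration.
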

\end{mdframed}

To provide intuition on why this result can be effective in tackling the loss (and gradient) explosion issue in our betting game, let us explicitly write out the term $\| \nabla \ell_t(\theta_t) \|_{\theta_t}^{*2}$ in the regret bound of FTRL for the game. Consider the difference-in-means testing. We have:
\begin{align} \label{eqnorm}
\textstyle
\| \nabla \ell_t(\theta_t) \|_{\theta_t}^{*2}
= \underbrace{ \left(\frac{g_t}{1-g_t \theta_t}\right)^2 }_{ = (\nabla \ell_t(\theta_t))^2 }
\underbrace{ \frac{(1-\theta_t)^2(1+\theta_t)^2}{2+2\theta_t^2} }_{ =  ( \nabla^2 R(\theta_t) )^{-1} }.
\end{align}
From \eqref{eqnorm}, one can observe a delicate balance when $\theta_t$ approaches the boundary; the gradient becomes large while the inverse of Hessian  $( \nabla^2 R(\theta_t) )^{-1}$ diminishes. This interplay makes the local gradient norm $\| \nabla \ell_t(\theta_t) \|_{\theta_t}^{*2}$ remain small.

Before proceeding, let us verify that our proposed choice of the barrier functions satisfy $\| \nabla \ell_t(\theta_t) \|_{\theta_t}^* \leq 1 $,
and hence the condition of Lemma~\ref{lem:FTRL} can be satisfied.

\begin{mdframed}[backgroundcolor=black!10,rightline=false,leftline=false,topline=false,bottomline=false]
\begin{lemma} \label{lem:gradient_norm}
Set $\eta \leq \frac{1}{4}$.
Then, $\eta \| \nabla \ell_t(\theta_t) \|_{\theta_t}^* \leq \frac{1}{4} $ for both barrier functions of \eqref{barrier} and \eqref{barrier2}.
\end{lemma}
\end{mdframed}

\begin{proof}

Let us first consider the barrier function in \eqref{barrier} for the difference-in-means testing.
We have
\begin{align}
\| \nabla \ell_t(\theta_t) \|_{\theta_t}^{*2}  & = 
(\nabla \ell_t(\theta_t) )^2 ( \nabla^2 R(\theta_t) )^{-1}
=\left(\frac{g_t}{1-g_t \theta_t}\right)^2 \frac{(1-\theta_t)^2(1+\theta_t)^2}{2+2\theta_t^2}\notag 
\\ &
\leq \frac{1}{(1- |\theta_t|)^2} \frac{(1-\theta_t)^2(1+\theta_t)^2}{2+2\theta_t^2} 
\leq \frac{1}{2+2\theta_t^2} \leq 1,
\label{eq:discuss_g}
\end{align}
where the second-to-least inequality is because $(1- |\theta_t|)^2$ will cancel out with either $(1-\theta_t)^2$
or $(1+\theta_t)^2$, depending on the sign of $\theta_t$. 

On the other hand, for the one-sided testing,
using \eqref{barrier2}, we have
\begin{align}
\| \nabla \ell_t(\theta_t) \|_{\theta_t}^{*2} = 
(\nabla \ell_t(\theta_t) )^2 ( \nabla^2 R(\theta_t) )^{-1}&
=\left(\frac{g_t}{1-g_t \theta_t}\right)^2 \frac{\theta_t^2(1-\theta_t)^2}{\theta_t^2+(1-\theta_t)^2}
\leq \frac{\theta_t^2}{\theta_t^2+(1-\theta_t)^2}
\leq 1,
\end{align}
where the second-to-last inequality uses that
$\left(\frac{g_t}{1-g_t \theta_t}\right)^2 \leq \frac{1}{(1-\theta_t)^2}$.
We have now completed the proof.
\end{proof}

We now identify a condition under which FTRL+Barrier achieves a smaller regret bound than ONS, with its implications for sequential hypothesis testing to be elaborated upon shortly.
\begin{mdframed}[backgroundcolor=black!10,rightline=false,leftline=false,topline=false,bottomline=false]
\begin{lemma} \label{lem:FTRL_const}
Denote $G_t:= \sum_{s=1}^t \nabla \ell(\theta_s) $. 
Suppose that there exists a time point $t_0$ such that for all $t \geq t_0$, we have
\begin{equation} \label{growth}
\textbf{(Linear growth of cumulative gradients)} \qquad 
\left | G_t   \right| \geq ct,
\end{equation}
for some $c>0$. 
Then, FTRL+Barrier (Algorithm~\ref{alg:FTRL}) satisfies  
$$\mathrm{Regret}_T(\theta_*) 
\leq \frac{t_0}{8\eta} + \frac{2}{c'^2\eta} \left( \frac{1}{t_0-1} - \frac{1}{T-1} \right )
+ \frac{  R(\theta_*)  }{\eta},$$
where $\eta\leq \frac{1}{4}$ is the parameter, $c' > 0$ is a constant, and $\theta_* \in K$ is any comparator.
\end{lemma}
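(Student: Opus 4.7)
The plan is to invoke Lemma~\ref{lem:FTRL} on FTRL with a self-concordant barrier and bound the local-norm sum $\sum_{t=1}^T \|\nabla\ell_t(\theta_t)\|_{\theta_t}^{*2}$ in two regimes separated by~$t_0$. I first verify the hypotheses of Lemma~\ref{lem:FTRL}. Using $|g_t|\le 1$ and $1-g_t\theta_t\ge 1-|\theta_t|\ge (1-\theta_t^2)/2$ in \myeqref{eqnorm} gives the pointwise estimate $\|\nabla\ell_t(\theta_t)\|_{\theta_t}^{*2} \le g_t^2(1+|\theta_t|)^2/(2+2\theta_t^2) \le 1$, so $\eta\|\nabla\ell_t(\theta_t)\|_{\theta_t}^* \le 1/4$ whenever $\eta\le 1/4$. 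Since $R(0)=0=\min_{\theta\in[-1,1]}R(\theta)$, Lemma~\ref{lem:FTRL} reduces to
\[
\mathrm{Regret}_T(\theta_*) \;\le\; 2\eta\sum_{t=1}^T \|\nabla\ell_t(\theta_t)\|_{\theta_t}^{*2} + \frac{R(\theta_*)}{\eta}.
\]

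For the warm-up window $t\le t_0$, I use the bound $\|\nabla\ell_t(\theta_t)\|_{\theta_t}^{*2}\le 1/(16\eta^2)$ implicit in the hypothesis of Lemma~\ref{lem:FTRL}, which contributes $2\eta\cdot t_0\cdot 1/(16\eta^2) = t_0/(8\eta)$---matching the first summand of the claim.

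For the tail $t\ge t_0+1$, I would exploit the FTRL first-order optimality condition. Setting to zero the gradient of $\eta\langle G_{t-1},\theta\rangle + R(\theta)$ yields $R'(\theta_t)=-\eta G_{t-1}$, i.e., $2\theta_t/(1-\theta_t^2)=-\eta G_{t-1}$. The linear-growth hypothesis $|G_{t-1}|\ge c(t-1)$ then drives the iterate toward the boundary at the rate
\[
1-\theta_t^2 \;\le\; \frac{2|\theta_t|}{\eta c(t-1)} \;\le\; \frac{2}{\eta c(t-1)}.
\]
Substituting into \myeqref{eqnorm}, the numerator factor $(1-\theta_t^2)^2\le 4/[\eta^2 c^2(t-1)^2]$ supplies the desired $O(1/t^2)$ decay. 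Absorbing the residual ratio $g_t^2/[(1-g_t\theta_t)^2(2+2\theta_t^2)]$ into a constant $1/(c')^2$ (see obstacle below), I obtain $\|\nabla\ell_t(\theta_t)\|_{\theta_t}^{*2}\le 2/[c'^2\eta^2(t-1)^2]$. Telescoping $\sum_{t=t_0+1}^T 1/(t-1)^2\le 1/t_0 - 1/(T-1)$ (via $1/k^2\le 1/[k(k-1)]=1/(k-1)-1/k$) and multiplying by $2\eta$ produces the middle summand $(4/(c'^2\eta))(1/t_0 - 1/(T-1))$. Adding $R(\theta_*)/\eta$ completes the claim.

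The main obstacle is isolating the constant $c'$. The residual factor $g_t^2/[(1-g_t\theta_t)^2(2+2\theta_t^2)]$ can, in principle, grow like $(1-|\theta_t|)^{-2}\sim t^2$ in the adversarial regime where $g_t$ and $\theta_t$ share a sign near a boundary; however, such a choice of $g_t$ would make $\nabla\ell_t$ push $|G_t|$ back toward zero, directly conflicting with the linear-growth hypothesis~\myeqref{growth}. Converting this qualitative tension into a quantitative uniform bound---i.e., extracting a valid $c'$ in terms of $c$ and $\eta$ from the hypothesis---is the technical heart of the argument.
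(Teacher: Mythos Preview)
Your approach matches the paper's: both verify the hypothesis of Lemma~\ref{lem:FTRL} (the paper via a case split on the sign of $\theta_t$, you via the cleaner inequality $1-g_t\theta_t\ge(1-\theta_t^2)/2$), split the local-norm sum at $t_0$, apply the $1/(16\eta^2)$ bound on the warm-up window, and for $t>t_0$ use the FTRL optimality condition $2\theta_t/(1-\theta_t^2)=-\eta G_{t-1}$ with $|G_{t-1}|\ge c(t-1)$ to force $1-\theta_t^2=O(1/t)$, then telescope.

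The obstacle you isolate is exactly the step the paper also treats by assertion. After bounding $(\nabla\ell_t(\theta_t))^2\le (1-\theta_t)^{-2}$ via monotonicity in $g_t$, the paper solves the quadratic for $\theta_t$ explicitly and then simply declares ``based on the expression of $\theta_t$ \ldots\ there must exist constant $c'$ which satisfies $((1-\theta_t)\bar c)^2\ge c'^2$ for all $t$,'' without a quantitative derivation. Your heuristic---that a gradient large enough to blow up the residual factor would have the wrong sign, eroding $|G_t|$ and contradicting \myeqref{growth}---is in the right spirit and arguably more informative than the paper's bare assertion, but neither version extracts $c'$ explicitly in terms of $c$ and $\eta$. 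So you have not missed anything the paper supplies; the gap you flag is present in the original as well.
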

\end{mdframed}

The proof of Lemma~\ref{lem:FTRL_const} is available in Section~\ref{supp:A}.
What Lemma~\ref{lem:FTRL_const} shows is that under the condition of 
a linear growth of cumulative gradients $G_t$, Algorithm~\ref{alg:FTRL} can actually
have a \emph{constant} regret $O(t_0)$, modulo the value of the barrier function at the benchmark $\theta_* \in \K$. In other words, once the total number of rounds $T$ is sufficiently large, i.e., $T \geq t_0$, the cumulative regret stays at $O(t_0)$, which is better than $O( \ln (T))$ of ONS (c.f. Lemma~\ref{lem:ONS}).
Hence, Algorithm~\ref{alg:Betting} with FTRL+barrier can have a shorter expected time to reject the null hypothesis $\H_0$ when $\H_1$ is true, as we will demonstrate shortly.  

In the following, we provide a couple of concrete scenarios of sequential hypothesis testing where the linear growth condition of cumulative gradients holds.

\begin{figure}[t]
\centering
\subfloat[Uniform distributions with disjoint supports. ]{\label{fig:a}\includegraphics[width=0.45\textwidth]{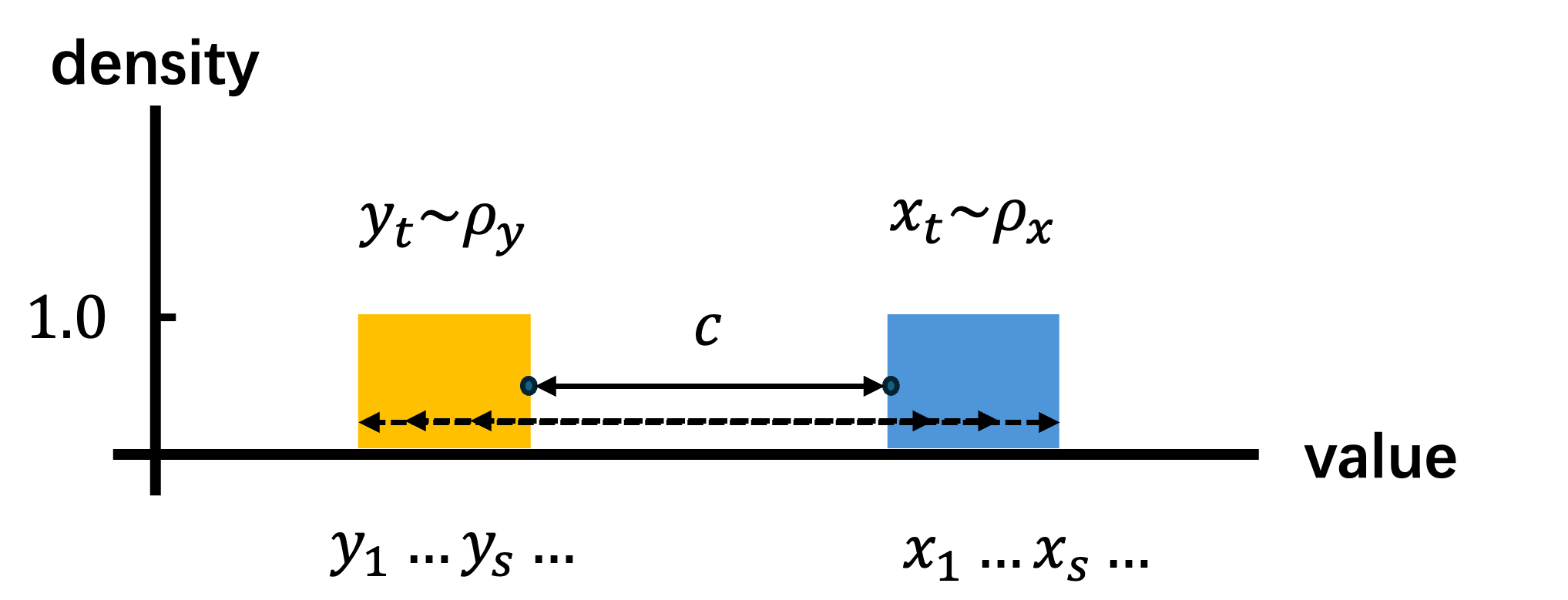}}
\subfloat[Distributions with overlapping supports.]{\label{fig:b}\includegraphics[width=0.45\textwidth]{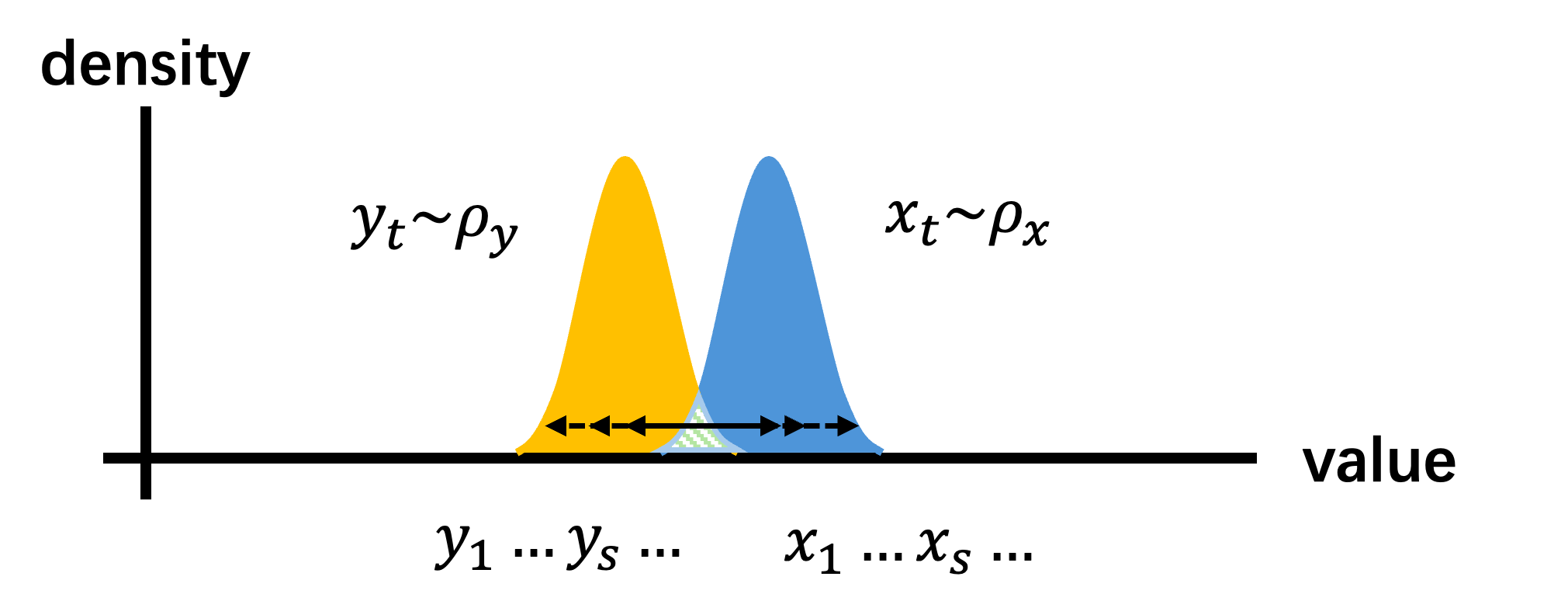}}
\caption{Illustration of Example 1 and 2.}
\label{fig:examples}
\end{figure}

\begin{mdframed}[backgroundcolor=black!10,rightline=false,leftline=false,topline=false,bottomline=false]
\noindent
\textbf{Example 1:}~
\textit{(Distributions with disjoint supports.) Consider $x_t \sim \rho_x$ and $y_t \sim \rho_y$, where $\rho_x$ and $\rho_y$ have disjoint but continuous supports, as illustrated in Figure~\ref{fig:a}. Then, the linear growth condition \eqref{growth} is satisfied for all $t \geq 1$.}
\end{mdframed}

\begin{proof} 
With the loss of generality, consider $x_t \in \mathrm{Uniform}(a,b)$ and $y_t \in \mathrm{Uniform}(m,n)$,
where $a> n$. Then, we have $g_s := x_s-y_s\geq a-n$. Thus, 
\begin{equation}
    \sum_{s=1}^t g_s =\sum_{s=1}^t (x_s-y_s) \geq (a-n)t\Rightarrow \left| \sum_{s=1}^t \nabla \ell_t(\theta_t) \right|=\left|\sum_{s=1}^t \frac{g_s}{1-g_s\theta_s}\right|
    \gtrsim (a-n)t.
    \label{eq:linear_cumulative}
\end{equation}
\end{proof}

\begin{mdframed}[backgroundcolor=black!10,rightline=false,leftline=false,topline=false,bottomline=false]
\noindent
\textbf{Example 2:}~
\textit{(Distributions with overlapping supports; high signal-to-noise ratio.) Denote $\sigma_x^2$ the variance of samples $\{x_t\}$ and $\sigma_y^2$ the variance of samples $\{y_t\}$. 
Then, with probability at least $1-\delta$, the linear growth condition of cumulative gradients \eqref{growth} holds for all $t \geq t_0$, where $t_0=\frac{1}{b^2\delta(\mu_x-\mu_y)^2/(\sigma_x^2+\sigma_y^2)}$, with $b\in(0,1)$. Specifically, the constant in \eqref{growth}, c, is proportional to $(1-b)|\mu_x-\mu_y|$. Based on the expression of $t_0$, if the signal-to-noise ratio (i.e., $\frac{(\mu_x-\mu_y)^2}{\sigma_x^2+\sigma_y^2}$) is high, then the growth condition is easily satisfied for any sufficiently large $t$.
Figure~\ref{fig:b} illustrates an example.}
\end{mdframed}

\begin{proof}
We denote the mean values as $\mathbb{E}(x_t)=\mu_x$ and $\mathbb{E}(y_t)=\mu_y$, their variances as $\mathrm{Var}(x_t)=\sigma_x^2$ and $\mathrm{Var}(y_t)=\sigma_y^2$.  Since $x_t$ and $y_t$ are independent random variables, then we have 
 $   \mathbb{E}\left(\sum_{s=1}^t g_s\right)=\sum_{s=1}^t\mathbb{E}(x_s-y_s)=t\cdot (\mu_x-\mu_y)$
and
 $   \mathrm{Var}\left(\sum_{s=1}^t g_s\right)=\sum_{s=1}^t\mathrm{Var}(x_s-y_s)=t\cdot (\sigma_x^2+\sigma_y^2).$

Let us assume that $\mu_x>\mu_y$ without the loss of generality. 
In the following, we would like to show with high probability,
the linear growth condition holds, i.e.,
$\sum_{s=1}^t g_s \geq (1 - b) t ( \mu_x - \mu_y)$
for some $b \in (0,1)$.
By the Chebyshev's inequality, the probability of the complementary event when the linear growth condition
does not hold can be bounded as:
\begin{align}
    \mathbb{P}\left(\mathbb{E}\left[\sum_{s=1}^t g_s\right]-\sum_{s=1}^t g_t\geq b\cdot\mathbb{E}\left[\sum_{s=1}^t g_s\right]\right)&\leq \mathbb{P}\left(\left|\mathbb{E}\left[\sum_{s=1}^t g_s\right]-\sum_{s=1}^t g_s\right|\geq b \mathbb{E}\left[ \sum_{s=1}^t g_s\right] \right)\notag\\
    &\leq\frac{\mathrm{Var}(\sum_{s=1}^t g_s)}{b^2 (\mathbb{E}[\sum_{s=1}^t g_s])^2}
    =\frac{1}{t}\frac{1}{b^2}\frac{1}{(\mu_x-\mu_y)^2/(\sigma_x^2+\sigma_y^2)}\label{eq:t0_highpro}.
\end{align}

When $t$ is large enough, and the sum of the variance is much smaller than the difference of the mean values (i.e., high signal-to-noise ratio), the probability
that the linear growth condition does not hold
can be small. Specifically, by \eqref{eq:t0_highpro},
 if we would like the failure probability be bounded by $\delta$, then for $t\geq \frac{1}{b^2\delta(\mu_x-\mu_y)^2/(\sigma_x^2+\sigma_y^2)}$, we know that the linear growth condition holds
with probability at least $1-\delta$.

\end{proof}

\begin{mdframed}[backgroundcolor=black!10,rightline=false,leftline=false,topline=false,bottomline=false]
\begin{theorem} \label{thm:main1}
Following the notations and assumptions in Theorem~\ref{thm:1}.
Let $\mathrm{OAlg}$ be FTRL+Barrier (Algorithm~\ref{alg:FTRL}) with the parameter $\eta\leq \frac{1}{4}$. Then Algorithm~\ref{alg:Betting} is a level-$\alpha$ sequential test with asymptotic power one. Furthermore, assume that the linear growth condition \eqref{growth} holds for all $t \geq t_0$ for some $t_0$. Then,
the expected rejection time $\tau$ under $\H_1$ can be bounded as
$$ \E[ \tau ] = \Theta\left( \frac{t_0+\ln \left(1/\alpha\right)}{ \omega_* } + \frac{\sigma^2}{\omega_*^2} \right).$$ 
\end{theorem}
\end{mdframed}

\begin{proof}
We first note that by invoking Theorem~\ref{thm:1},
Algorithm~\ref{alg:Betting} is a level-$\alpha$ sequential test with asymptotic power one.

To derive the expected rejection time $\E[\tau]$,
by Theorem~\ref{thm:1},
we need to find $t_*$ so that
$\mathrm{Regret}_t(\theta_*) \leq \frac{\omega_* t}{2} - \ln \left(1/\alpha\right)$
for all $t \geq t_*$.
By Lemma~\ref{lem:FTRL_const}, if the linear growth condition holds for all $t \geq t_0$, then $\mathrm{Regret}_t(\theta_*)= \Theta(t_0)$.
Consequently, the requirement of the regret becomes
$t_0 \lesssim \frac{\omega_* t}{2} - \ln \left(1/\alpha\right)$, which translates to
$t \gtrsim  \frac{t_0+\ln \left(1/\alpha\right)}{ \omega_* }$.
Therefore, $t_* = \Theta\left( \frac{t_0+\ln \left(1/\alpha\right)}{ \omega_* } \right)$.
Then, by Theorem~\ref{thm:1}, we have
$\E[ \tau ] \leq t_* + \frac{2 \sigma^2}{ \omega_*^2 }
= \Theta\left( \frac{t_0+\ln \left(1/\alpha\right)}{ \omega_* } + \frac{\sigma^2}{\omega_*^2} \right)$,
which completes the proof.
\end{proof}

We note that recently, Waudby-Smith et al.~\cite{waudby2025universal} and Agrawal and Ramdas~\cite{agrawal2025stopping} independently have
very nice results about the lower bound of the expected rejection time when the data is i.i.d.
Specifically, Waudby-Smith et al.~\cite{waudby2025universal} show that 
when the significance-level parameter $\alpha$ approaches zero, it holds that
\begin{equation} \label{lower}
\lim_{ \alpha \to 0_+} \frac{ \E[\tau] }{  \ln (1/\alpha) } \geq \frac{1}{\omega_*},
\end{equation}
where $\omega_*$ is the maximum of the expected wealth in a single round. The upper bound result of 
Theorem~\ref{thm:main1} matches that of \eqref{lower}  when
$\alpha$ approaches $0$.
On the other hand, Agrawal and Ramdas~\cite{agrawal2025stopping} provide two lower-bound results. One of them also concerns the case when $\alpha$ approaches zero: they show that the ratio $\lim_{\alpha \to 0_+} \frac{ \E[\tau] }{ \ln (1/\alpha) }$ is lower-bounded by the inverse of a certain infimum KL divergence regarding testing the null $\mathcal{P}$ against the alternative $\mathcal{Q}$. The other concerns the case when the KL divergence approaches $0$ while fixing $\alpha$, for which we refer the reader to their work for further details.

We now compare the result of Theorem~\ref{thm:main1} for FTRL+Barrier and the guarantee of using ONS (see Algorithm~\ref{alg:ONS1})
as $\mathrm{OAlg}$ in Algorithm~\ref{alg:Betting}
for testing by betting in the literature.  
Chugg et al.~\cite{Chugg2023} provide an upper-bound of the expected rejection time for ONS 
in the difference-in-means setting, which is 
$\E[ \tau^{\mathrm{ONS}} ] \lesssim \frac{1}{\Delta^2} \ln \left( \frac{1}{\Delta^2 \alpha}  \right) $,
where $\Delta:= |\mu_x - \mu_y |$.
However, 
it is not straightforwardly compare the bound in Theorem~\ref{thm:main1} and the result in \cite{Chugg2023} without approximating $\omega_*$,
which in turn might need some assumptions on $\theta_* := \arg\max_{\theta \in \K } \E\left[ \ln (1 - g \theta) \right]$.
To this end, let us consider $| \theta_*| \leq \frac{1}{2}$.
Then, by using the inequality $\ln (1+c) \geq c - c^2, \forall c \in [-1/2,1/2]$, 
we can obtain
\begin{align}
\omega_* = 
\max_{ \theta \in K }
\E\left[ \ln \left(1-g \theta\right) \right] 
& \geq 
\max_{ \theta \in K }
\E\left[ - \theta g - \theta^2 g^2     \right]
 =
\max_{ \theta \in K }  - \theta (\mu_x - \mu_y) - \theta^2 
\left( \mathrm{Var}[g] + (\mu_x - \mu_y )^2    \right) \notag 
\\ & 
\approx \frac{ \Delta^2 }{ 
\mathrm{Var}[ g ] + \Delta^2 }
\approx  \frac{ \Delta^2 }{ 
1 + \Delta^2 }, \label{approxi}
\end{align}
where we note $\E[ g ] = \mu_x - \mu_y$,
$\mathrm{Var}[g]= \E[ g^2 ] - (\E[ g ])^2 $,
and $\mathrm{Var}[ g] \leq 1$.
Hence, if $\frac{1}{\omega_*} = \Theta\left( \frac{1}{\Delta^2}  \right)$, then when $\alpha$ approaches $0$,
the rejecting time bound of FTRL+Barrier
becomes $\Theta\left( 
\frac{1}{ \Delta^2 } 
\ln \left( \frac{1}{\alpha} \right) \right)$
under the linear growth condition, which is better than
 that using ONS, i.e., better than
 $\Theta \left( \frac{1}{\Delta^2} \ln \left( \frac{1}{\Delta^2 \alpha}  \right) \right)$.

\subsection{Optimistic-FTRL+Barrier for Testing by Betting} \label{section:main2}

In this subsection, we introduce another algorithm for testing by betting, which we call Optimistic-FTRL+Barrier. 
We draw on ideas from \emph{optimistic online learning}, which concerns the scenario where the learner incorporates a guess of the next gradient $m_t$ to determine the action $\theta_t$ at each round before observing its loss function \cite{chiang2012online, rakhlin2013online, joulani2017modular,wang2018acceleration,orabona2019modern,wang2024no,chen2024optimistic}. 
In particular, if the sequence of loss functions is predictive and the learner's estimation of them is accurate enough, then smaller regret can be achieved. We extend this machinery to testing by betting, and the intuition is that if the player can predict their incoming sequence of samples, they should incorporate this information into their bets to enhance their performance in 
the testing-by-betting game.
This variant is shown in Algorithm~\ref{alg:OFTRL}, 
where we note that $R(\theta)$ is the barrier function, which is 
\eqref{barrier} for the difference-in-means testing 
and \eqref{barrier2} for the one-sided testing.

\begin{algorithm}[t]
\caption{OptimisticFTRL+Barrier as $\mathrm{OAlg}$ for Testing by Betting}
\label{alg:OFTRL}
\begin{algorithmic}[1] 
\small
\STATE \textbf{Init:} $\theta_1 \in \K$.
\STATE \textbf{Require:} the parameter $\eta$. 
\STATE \textbf{Specify:} the decision space $\K$ and the barrier function $R(\theta)$.
\STATE\textbf{For} $t=1,2,\dots, T$ \textbf{do}
\STATE\quad Play $\theta_t \in \K$.
\STATE\quad Receive $\ell_t(\theta)= - \ln (1-g_t \theta)$ and incur loss $\ell_t(\theta_t)$.
\STATE \quad Construct a guess $m_{t+1}$ of the next gradient.
\STATE \quad Update
$\theta_{t+1} \gets \underset{\theta \in \K }{\arg\min} \, \eta 
\left \langle  m_{t+1} + \sum_{s=1}^t \nabla \ell_s(\theta_s), \theta \right  \rangle + 
 R(\theta).$
\STATE\textbf{End For}
\end{algorithmic}
\end{algorithm}

\begin{mdframed}[backgroundcolor=black!10,rightline=false,leftline=false,topline=false,bottomline=false]
\begin{lemma} [Adapted from \cite{wang2024no}] \label{lem:OptimisticFTRL}
Optimistic-FTRL+Barrier (Algorithm~\ref{alg:OFTRL}) has
\begin{equation}
\mathrm{Regret}_T(\theta_*)
\leq
2\eta\sum_{t=1}^T\|\nabla \ell_t (\theta_t)- m_t \|_{\theta_t}^* \|\nabla\ell_t (\theta_t)\|_{\theta_t}^*+ \frac{R( \theta_*) - \min_{\theta \in \K} R(\theta)}{\eta} ,
\end{equation}
where $\theta_* \in \mathcal{K}$ is any comparator.
\end{lemma}
\end{mdframed}

\begin{wrapfigure}[11]{R}{0.44\textwidth}
    \centering
    \includegraphics[width=0.44\textwidth]{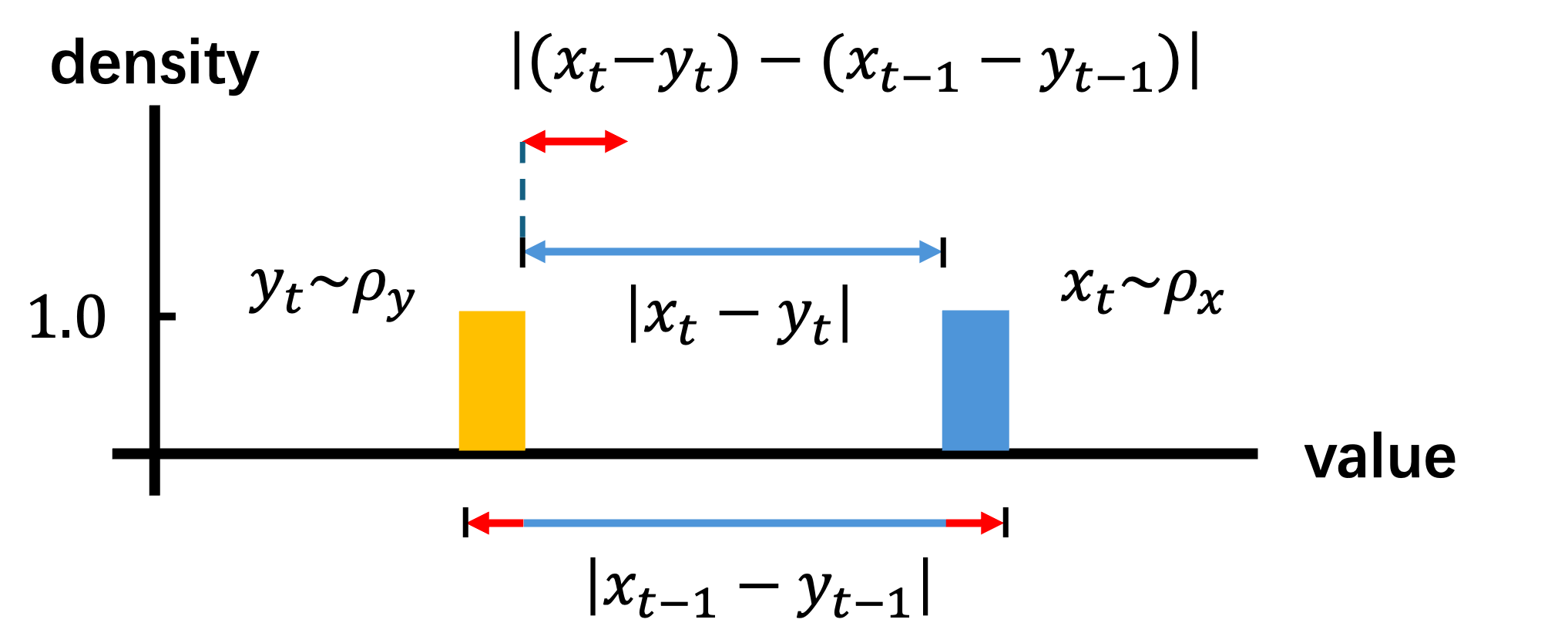}
    \caption{Illustration of Example 3. 
    }
    \label{fig:ex4}
\end{wrapfigure}

Comparing the regret bound above and the regret of FTRL+Barrier in Lemma~\ref{lem:FTRL}, it is evident that if the guess $m_t$ is close to the next gradient $\nabla \ell_t(\theta_t)$, 
i.e., when $\clubsuit: \|\nabla \ell_t (\theta_t)- m_t \|_{\theta_t}^* \ll \|\nabla \ell_t (\theta_t) \|_{\theta_t}^* $,
then the \emph{optimistic} version has a smaller regret, thereby speeding up the process of rejecting the null when the alternative holds. However, while the regret can be potentially smaller than its non-optimistic counterpart when $m_t$ is ``close" to $\nabla \ell_t (\theta_t)$.
One natural choice for $m_{t+1}$ is to set it to the most recent gradient $\nabla \ell_{t}(\theta_{t})$ at $t$, i.e., set $m_{t+1} \gets \nabla \ell_{t}(\theta_{t})$. 
The hope is that the gradient in the testing-by-betting game changes slowly, i.e.,
$\nabla \ell_{t}(\theta_{t}) \approx \nabla \ell_{t-1}(\theta_{t-1}),$
in which case the \emph{optimistic} variant would have a real advantage. However, the local norm $\|\nabla \ell_t (\theta_t)- \nabla \ell_{t-1} (\theta_{t-1}) \|_{\theta_t}^*$ may not be bounded. 
Determining how to choose the hint $m_{t+1}$ such that the local norm remains bounded while achieving smaller regret bounds when $m_{t+1}$ provides a good estimate remains an open question in this work.

On the other hand, Example 3 below illustrates a scenario in difference-in-means testing where Optimistic-FTRL+Barrier can perform better than its non-optimistic counterpart under certain conditions.
Here, we use the fact that
$\| \nabla \ell_t (\theta_t)- \nabla \ell_{t-1} (\theta_{t-1}) \|_{\theta_t}^{*2}
\leq \frac{ \left( \theta_t^2 (g_t - g_{t-1})^2 + g_{t-1}^2 (\theta_t - \theta_{t-1})^2 \right) (1 - \theta_t^2)^2 }{ (1+\theta_t^2)(1 - g_t \theta_t )^2 (1 - g_{t-1} \theta_{t-1})^2 }$
and the expression of $\| \nabla \ell_t (\theta_t) \|_{\theta_t}^{*2}$ in \eqref{eqnorm}.

\begin{mdframed}
\noindent
\textbf{Example 3:}~\textit{(IID Samples from distributions with small variances.) 
Consider $x_t$ and $y_t$ are respectively drawn from two uniform distributions with disjoint supports, i.e., $x_t\sim\rho_x$ and $y_t\sim\rho_y$, where $\rho_x$ and $\rho_y$ have small variances. 
When the update and the environment changes slowly such that $\theta_t \approx \theta_{t-1}$, then condition $\clubsuit$ could potentially hold if
\begin{equation}
    |{g_t}-{g_{t-1}}| \ll |{g_{t}}|\Leftrightarrow |(x_t-y_t)-(x_{t-1}-y_{t-1})| \ll |x_t-y_t|, \label{eq:op_better_eqt3}
\end{equation}
which can be satisfied when the distance between the means of two distributions is significantly larger than their variances, as illustrated in Figure~\ref{fig:ex4}.}
\end{mdframed}

\section{Experiments}

\noindent
\textbf{Baselines.} In the following experiments, we compare our proposed methods FTRL+Barrier (Algorithm~\ref{alg:FTRL}) and OFTRL+Barrier (Algorithm~\ref{alg:OFTRL}) with three baseline methods, which we detail as follows.
For all the following simulation results, we use the same parameter value, $\eta = 1$, for both FTRL+Barrier and OFTRL+Barrier.

\begin{wrapfigure}[12]{R}{0.46\textwidth}
\vspace*{-8mm}
    \begin{minipage}{0.46\textwidth}
\begin{algorithm}[H]
\caption{Online Newton Step.}
\label{alg:ONS1}
\begin{algorithmic}[1] 
\STATE Set decision space $\K^{\mathrm{ONS}}$.
\STATE Init $a_0 \gets 1$.
\STATE \textbf{For} $t = 1, 2, \dots, T$ \textbf{do}
    \STATE \quad Play $\theta_t \in \K^{\mathrm{ONS}}$.
    \STATE \quad Receive $\ell_t(\theta) := - \ln (1 - g_t \theta)$ \\
    \qquad and incur loss $\ell_t(\theta_t)$.
    \STATE \quad Set $b_t = \frac{g_t}{1 - g_t \theta_t}$,
    \STATE \quad Update $a_t = a_{t-1} + b_t^2$.
    \STATE \quad 
    $\theta_{t+1} = \operatorname{Proj}_{\K^{\mathrm{ONS}}}\left[ \theta_t - \frac{2}{2 - \ln 3} \frac{b_t}{a_t} \right]$. \\
\textbf{End For}
\end{algorithmic}
\end{algorithm}
  \end{minipage}
\end{wrapfigure}

\noindent
\textbf{1. Online Newton Step (ONS)}.
This is perhaps the most popular online learning algorithm in testing by betting game \cite{Cutkosky2018, Chugg2023, podkopaev2023sequential, teneggi2024bet, waudby2024estimating,podkopaev2024sequential,shekhar2023nonparametric,pandeva2024deep, CW2025}. 
As described in Algorithm~\ref{alg:ONS1}, ONS has a closed-form update and hence the implementation cost per round is cheap. 
We note that $\operatorname{Proj}_{\K^{\mathrm{ONS}}}$ in Line 8
is the projection operator on the corresponding constraint set $\K^{\mathrm{ONS}}$ described below.
Furthermore, since the loss functions are exp-concave in the games of testing-by-betting, ONS has a guarantee of $O\left(\log (T)\right)$ regret bound \cite{orabona2019modern}. However, as we pointed out in the introduction, to avoid the gradient explosion issue of the log loss, the aforementioned works consider the heuristic of halving the decision space: 
\begin{align}
\textbf{Difference-in-Means Testing (c.f.~\eqref{scenario-1a}) }:
& \qquad 
\mathcal{K}^{\mathrm{ONS}} \gets [-1/2,1/2] \notag \\ 
\textbf{One-Sided Testing  (c.f.~\eqref{scenario-2a}) }:
& \qquad
\mathcal{K}^{\mathrm{ONS}} \gets [0,1/2]. \notag
\end{align}
We also recall that $g_t := x_t - y_t \in [-1,1]$ in 
difference-in-means testing and $g_t := \mu_0 - x_t \in [-1,1]$ 
in one-sided testing.

\noindent
\textbf{2. Online Portforlio Algorithm based on Cover and Ordentlich \cite{cover1996universal} (CO96)}.
This algorithm is also considered in \cite{waudby2025universal}. It rejects the null hypothesis $H_0$ at time $t$ when the quantity $W_t^{\mathrm{CO96}}$ satisfies $W_t^{\mathrm{CO96}} \geq \frac{1}{\alpha}$, where $W_t^{\mathrm{CO96}}$ is defined as:
\begin{equation}
W_t^{\mathrm{CO96}} 
 = \exp\left\{ \ln \left( \hat{W}_t(\theta_t^{\max}) \right) - \frac{1}{2} \ln(t + 1) - \ln 2 \right\},
\end{equation}
where \footnote{
Instead of specifying 
$\ln \left( \hat{W}_t(\theta) \right) :=
\sum_{s=1}^t \ln \left(1 - (\mu_0 - x_s) \theta \right)$ 
for one-sided testing, we strictly follow \cite{waudby2025universal}, which considers \eqref{eq:woudby}. We refer the reader to \cite{waudby2025universal} for details of this formulation, which is related to their notion of portfolio regret.
}
\begin{align}
 \ln \left( \hat{W}_t(\theta) \right) := 
\begin{cases}
\sum_{s=1}^t \ln \left(1- (x_s-y_s) \theta \right)   & \textbf{(Difference-in-Means Testing)}
\\ 
\sum_{s=1}^t \ln \left(1- (\mu_0-x_s) \frac{\theta}{\mu_0}  \right)   & \textbf{(One-Sided Testing)} \label{eq:woudby},
\end{cases}
\end{align}
and that
$\theta_t^{\max} := \arg\max_{\theta \in \K^{\mathrm{OPS}}}
\ln \left( \hat{W}_t(\theta) \right)$, where $\K^{\mathrm{OPS}} = [-1,1]$ for difference-in-means testing, while $\K^{\mathrm{OPS}} = [0,1]$ for one-sided testing.
We note that computing $W_t^{\mathrm{CO96}}$
requires solving the optimization problem to obtain
$\theta_t^{\max}$.
Hence, this method is computationally more expensive than ONS, 
as ONS has a closed-form update.

\noindent
\textbf{3. Online Portfolio Algorithm based on Orabona and Jun \cite{orabona2023tight} (OJ23)}.
This is an algorithm also considered in \cite{waudby2025universal}.
It rejects the null hypothesis $H_0$ at time $t$ when the quantity $W_t^{\mathrm{OJ96}}$ satisfies $W_t^{\mathrm{CO96}} \geq \frac{1}{\alpha}$, where $W_t^{\mathrm{OJ23}}$ is defined as:

\begin{equation}
W_t^{\text{OJ23}} = \exp\left\{ \ln \left( \hat{W}_t(\theta_t^{\max}) \right) - \max_{j=0,\ldots,t} \ln \left( \frac{\pi\cdot (\lambda_t^{\max})^j (1 - \lambda_t^{\max})^{t-j} \Gamma(t+1)}{\Gamma(j+1/2)\Gamma(t-j+1/2)} \right) \right\}.
\end{equation}
As with CO96, this method involves solving the same optimization problem to get $\theta_t^{\max}$.

\subsection{Experimental Results of Difference-in-Means Testing} 

We conduct a few simulations to evaluate our proposed methods and the baselines detailed in the previous subsection.
The first concerns an \emph{easy} setting illustrated in Figure~\ref{fig:ex4} for Example~3. 
More precisely, the $H_1$ scenario is that $x_t \sim \mathrm{Uniform}(0.2, 0.4)$ 
and $y_t \sim \mathrm{Uniform}(0.7, 0.9)$.
Figure~\ref{Result_1} reports the simulation results. 
Here, all the methods use the same parameter value $\alpha=0.05$.
OFTRL+Barrier outperforms its non-optimistic counterpart, which is consistent with the analysis in Example~3.
In this setting, OJ23 and OFTRL+Barrier are the most competitive methods. 
However, OFTRL+Barrier has the additional advantage of a low implementation cost over OJ23, thanks to its closed-form update.
We also note that both FTRL+Barrier and OFTRL+Barrier outperform ONS. 

\begin{figure}[h]
\centering
\includegraphics[width=0.8\textwidth]{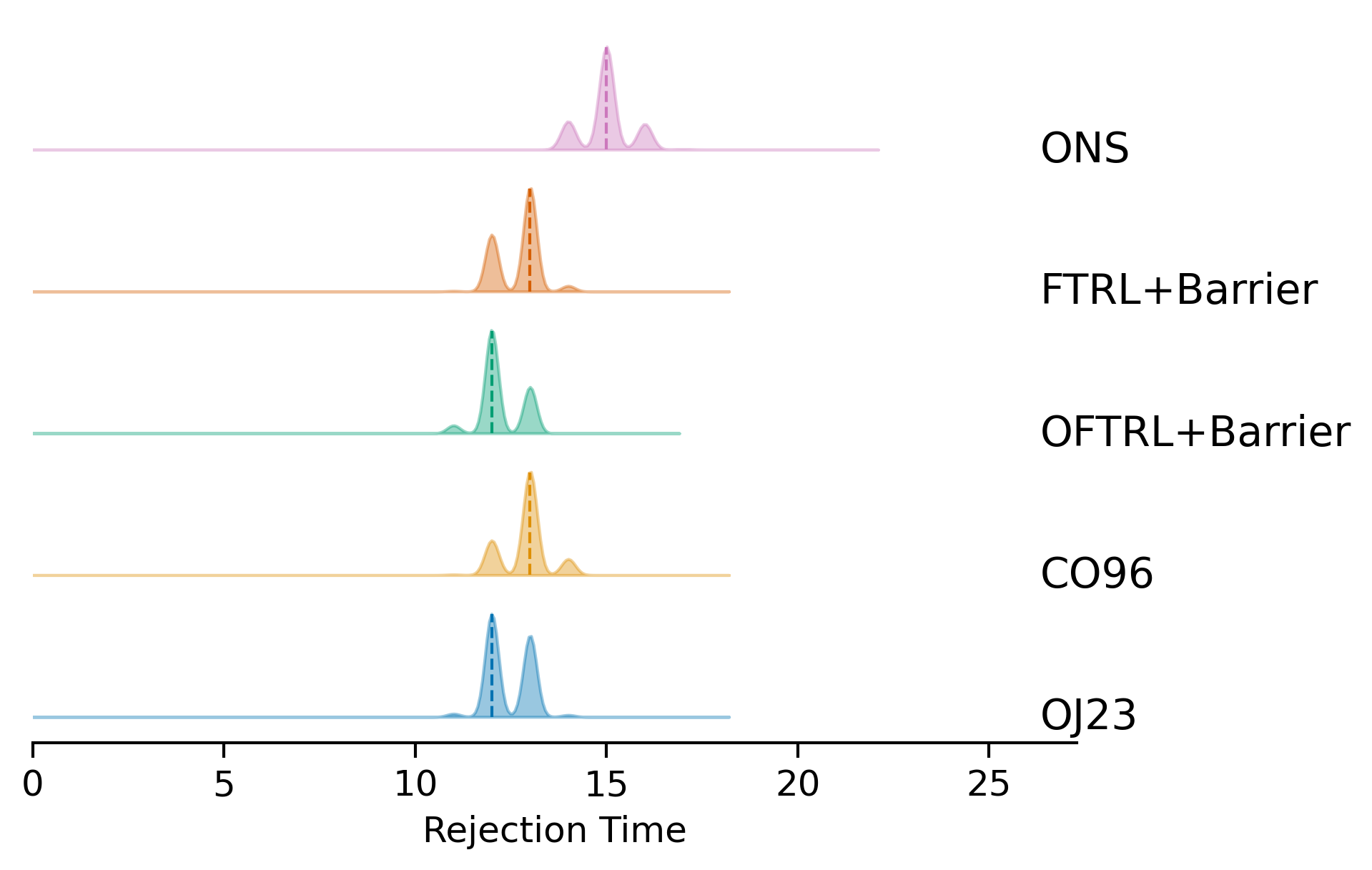}
\caption{Rejection time of different methods under the ``easy" regime illustrated in Figure~\ref{fig:ex4}.}
\label{Result_1}
\end{figure}

We now switch to more difficult regimes.  
For this purpose, we consider two types of distributions:  
uniform distributions and truncated normal distributions.  
Figure~\ref{scenario} illustrates the normalized histograms of samples from these distributions.  
One can see that the distributions have substantial overlap,  
which potentially makes the task of sequential hypothesis testing more challenging.
More concretely, we consider two $H_1$ scenarios:  
(a) $x_t \sim \mathrm{Uniform}(0.2, 0.8)$ and $y_t \sim \mathrm{Uniform}(0.3, 0.9)$;  
(b) $x_t \sim \hat{N}(0.5, 0.15)$ and $y_t \sim \hat{N}(0.65, 0.15)$,  
where $\hat{N}(\mu, \sigma^2)$ denotes a truncated normal distribution centered at $\mu$ with variance $\sigma^2$. 
To simulate the $H_0$ scenarios, we run all algorithms for $T = 500$ steps and record whether they reject $H_0$ within the first $T = 500$ steps.  
The sequence of samples $\{y_t\}$ in the $H_0$ scenario is obtained from those in the $H_1$ scenario by shifting all values by the same constant so that the first $T = 500$ samples of both sequences of $\{x_t\}$ and $\{y_t\}$ for the $H_0$ scenario have the same empirical mean.

\begin{figure}[t]
\centering
\includegraphics[width=0.6\textwidth]{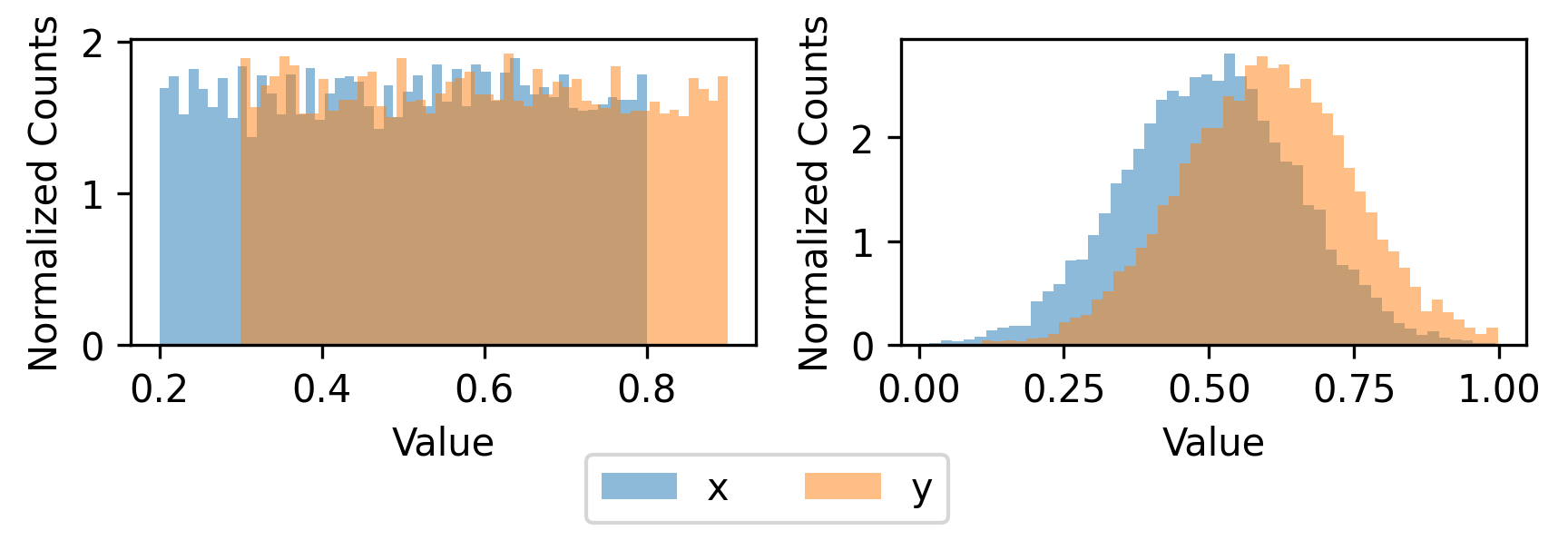}
\caption{\textbf{Empirical distributions in the simulation.} (Left) Uniform distributions; (Right) truncated normal distributions. The histograms are normalized so that the area under each curve is equal to one.}
\label{scenario}
\end{figure}

\begin{figure}[t]
\centering
\includegraphics[width=0.8\textwidth]{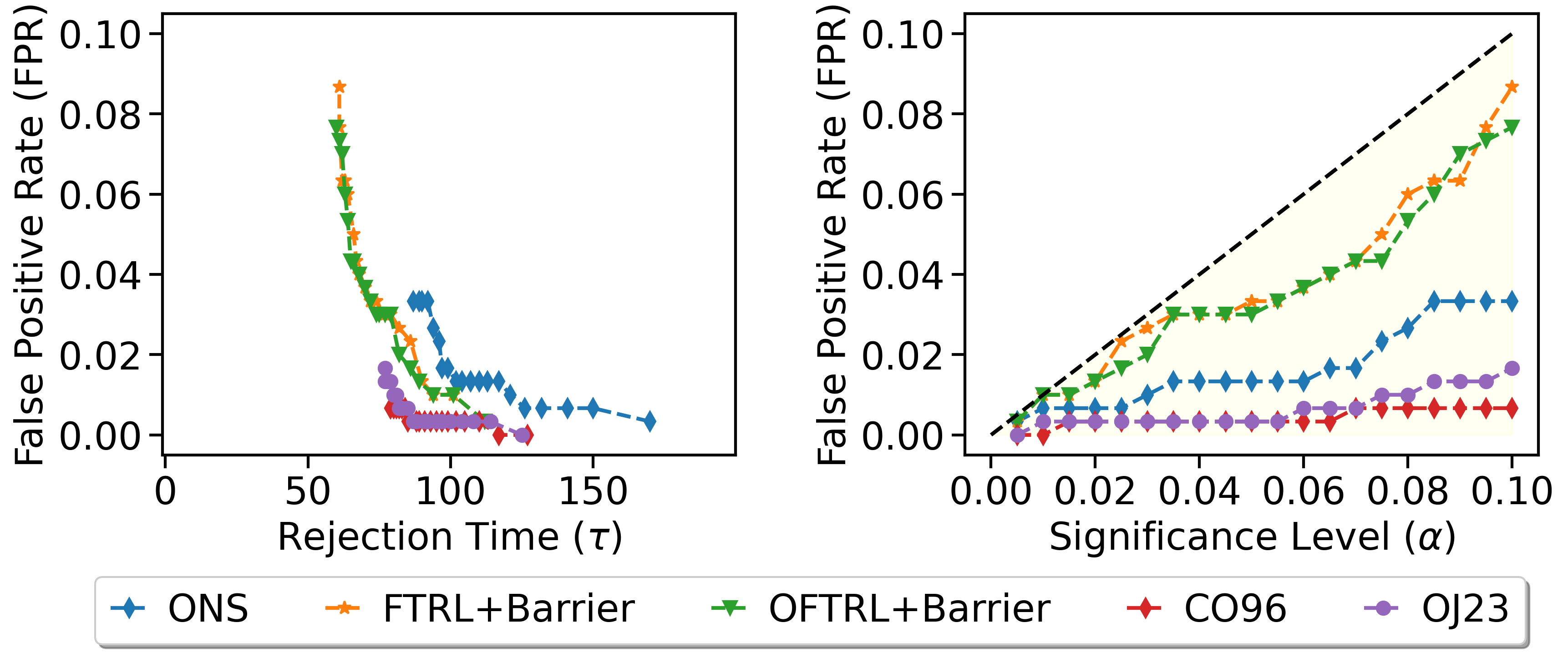}
\caption{Scenario (a) of difference-in-means testing: substantially overlapped uniform distributions.}
\label{exp:difference-in-means-uniform}
\end{figure}

Figures~\ref{exp:difference-in-means-uniform} and~\ref{exp:difference-in-means-truncated-normal} present the results for the aforementioned difference-in-means experimental setup under uniform and truncated normal distributions, respectively. To ensure statistical robustness, each method 
using a fixed parameter configuration was evaluated across $300$ repeated runs, with all reported results representing averages computed over these repeated runs.
Each individual marker along each curve denotes a method with a specific value of the significance level parameter $\alpha$. 
Specifically, we examine $20$ uniformly distributed values of $\alpha$ spanning the interval between $[0.005,0.1]$.
The subfigure on the left reports the rejection time under $H_1$ and the empirical rate of erroneously rejecting $H_0$ when $H_0$ is true by $T=500$ steps. 
The subfigure on the right supports that each method is a valid level-$\alpha$ test, as the empirical false positive rate is bounded by the corresponding value of~$\alpha$.
Both figures show that our proposed FTRL+Barrier and OFTRL+Barrier methods outperform ONS, which supports the motivation of this work.  
Furthermore, our proposed methods are also competitive with CO96 and OJ23, despite the latter involving a more complex implementation.  
In particular, the results indicate that FTRL+Barrier and OFTRL+Barrier can reject the null faster, albeit at the cost of a little higher empirical false positive rate.  
Nevertheless, this rate remains well controlled by the significance-level parameter~$\alpha$, as confirmed by the subfigure on the right.

\begin{figure}[t]
\centering
\includegraphics[width=0.8\textwidth]{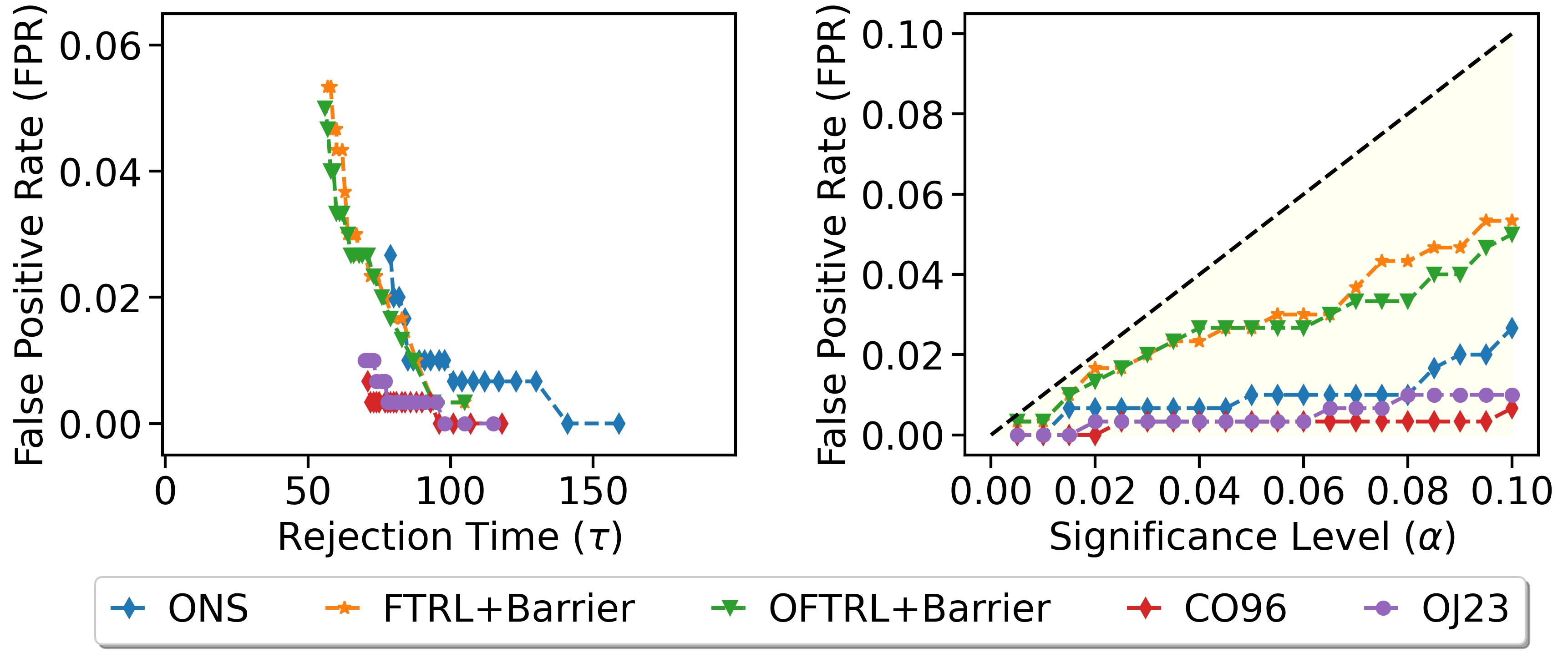}
\caption{Scenario (b) of difference-in-means testing: substantially overlapped truncated normal distributions.}
\label{exp:difference-in-means-truncated-normal}
\end{figure}

\begin{table}[h]
\centering
\caption{Runtime per iteration runtime in milliseconds for difference-in-means testing.}
\label{tab:avg-iteration-runtime1}
\begin{tabular}{lcc}
\toprule
\textbf{Method} & \textbf{Average (ms)} & \textbf{Std. Dev. (ms)} \\
\midrule
FTRL+Barrier   & 0.001694 & 0.000487 \\
OFTRL+Barrier  & 0.002280 & 0.000487 \\
ONS            & 0.001146 & 0.000391 \\
CO96           & 0.381144 & 0.108137 \\
OJ23           & 0.600817 & 0.185730 \\
\bottomrule
\end{tabular}
\end{table}

\noindent
\textbf{Implementation cost per round.}
Compared to the baselines CO96 and OJ23, a notable feature of FTRL+Barrier and OFTRL+Barrier is their closed-form updates.  
Tables~\ref{tab:avg-iteration-runtime1} report the average runtime in milliseconds per round of sequential hypothesis testing, where simulations were conducted on a MacBook Pro. FTRL+Barrier, OFTRL+Barrier, and ONS have significantly lower implementation costs compared to CO96 and OJ23. In particular, our methods achieve approximately a two-order-of-magnitude speedup relative to CO96 and OJ23.  
Although ONS exhibits the lowest average runtime, FTRL+Barrier and OFTRL+Barrier remain lightweight, as each iteration involves solving only a quadratic equation.  
Furthermore, Figure~\ref{Result_1},~\ref{exp:difference-in-means-uniform}, and~\ref{exp:difference-in-means-truncated-normal} have demonstrated that both FTRL+Barrier and OFTRL+Barrier outperform ONS in terms of rejection time and the balance between the performance under $H_0$ and $H_1$.

\subsection{Experimental Results of One-Sided Testing}

In this subsection, we report our simulation results for one-sided testing.  
Following \cite{waudby2025universal}, we consider two scenarios.
\textbf{Scenario (a)}: $H_0: \mu_x < 0.1$, where the samples $x_t$ are drawn from a Bernoulli distribution with parameter $0.95$ to simulate $H_1$, whereas to simulate $H_0$, the samples $x_t$ are drawn from a Bernoulli distribution with parameter $0.09$.
\textbf{Scenario (b)}: $H_0: \mu_x < 0.3$, where the samples $x_t$ are drawn from a Bernoulli distribution with parameter $0.4$ to simulate $H_1$, whereas to simulate $H_0$, the samples $x_t$ are drawn from a Bernoulli distribution with parameter $0.29$.
Figures~\ref{exp:one-sided-easy} and~\ref{exp:one-sided-hard} report the results,  
which show that the proposed FTRL+Barrier and OFTRL+Barrier consistently outperform ONS.
While CO96 and OJ23 perform well under the easy hypothesis testing scenario (Figure~\ref{exp:one-sided-easy}),  
our proposed methods outperform these baselines under the more challenging setting (Figure~\ref{exp:one-sided-hard}).

We also report the average runtime per round with standard deviation in Table~\ref{tab:avg-iteration-runtime2},  
where the results correspond to the challenging setting (i.e., the task in Figure~\ref{exp:one-sided-hard}).  
Similar runtimes were observed in the setting considered in Figure~\ref{exp:one-sided-easy}.
Table~\ref{tab:avg-iteration-runtime2} shows that our methods achieve approximately a two-order-of-magnitude speedup relative to CO96 and OJ23 in the one-sided testing, which is consistent with the findings in Table~\ref{tab:avg-iteration-runtime1}.

\begin{figure}[t]
\centering
\includegraphics[width=0.8\textwidth]{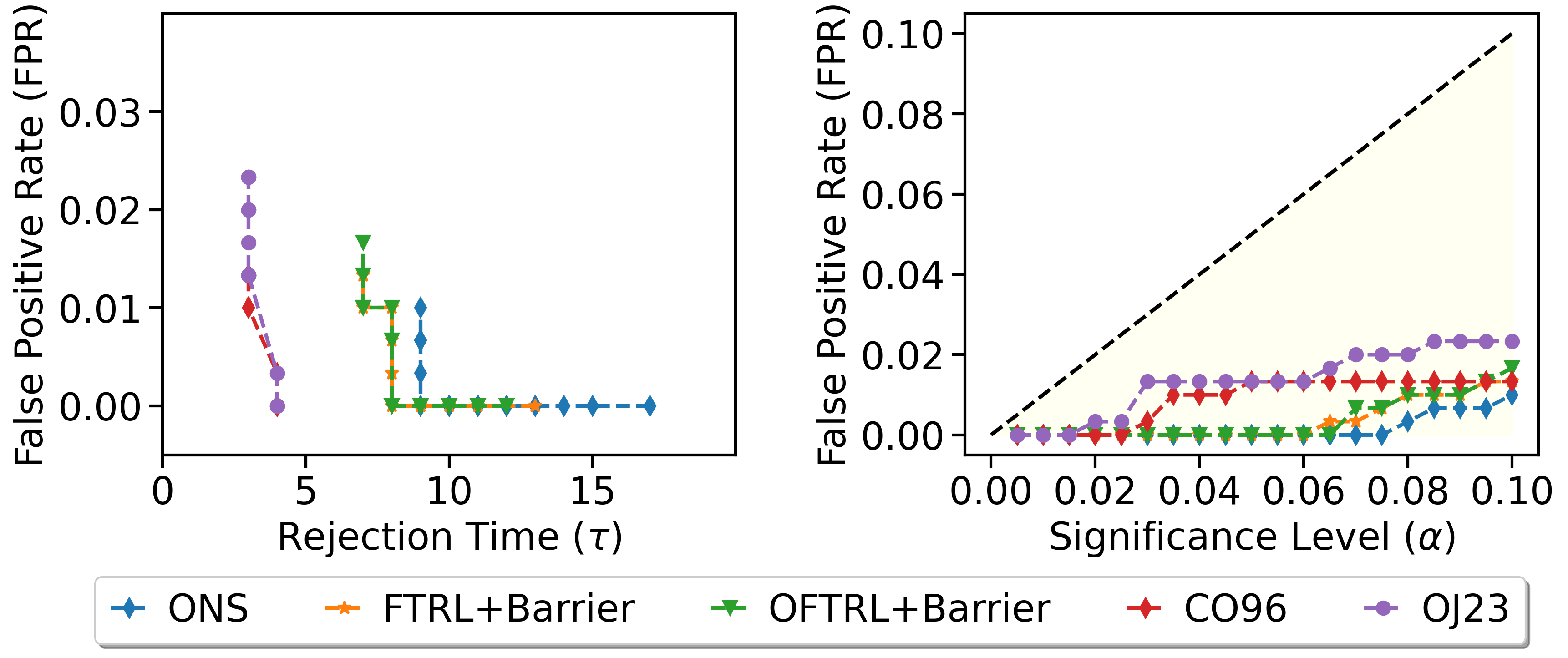}
\caption{One-sided testing 
$H_0: \mu_x < 0.1$, where $x_t \sim \mathrm{Bernoulli}(0.09)$ under $H_0$ and $x_t \sim \mathrm{Bernoulli}(0.95)$ under $H_1$.
}
\label{exp:one-sided-easy}
\end{figure}

\begin{figure}[t]
\centering
\includegraphics[width=0.8\textwidth]{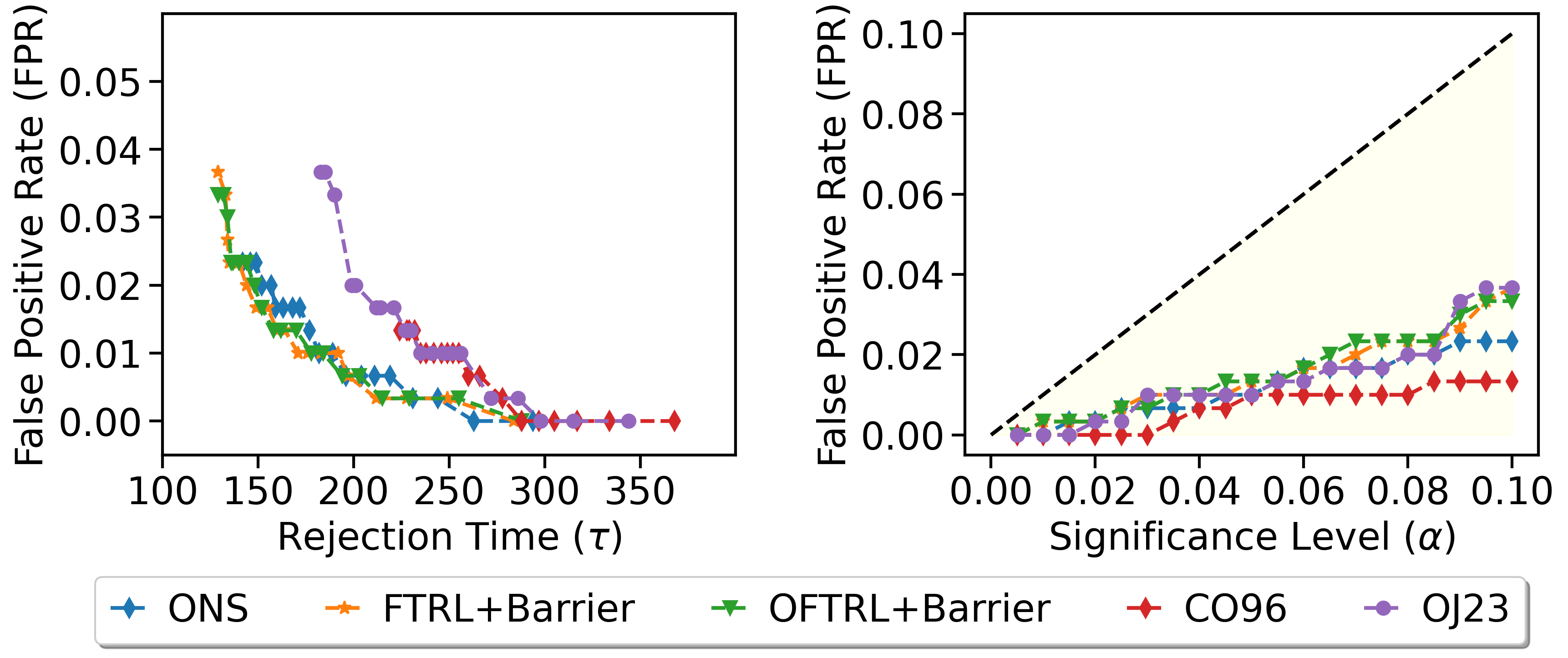}
\caption{
One-sided testing 
$H_0: \mu_x < 0.3$, where $x_t \sim \mathrm{Bernoulli}(0.29)$ under $H_0$ and $x_t \sim \mathrm{Bernoulli}(0.4)$ under $H_1$.
}
\label{exp:one-sided-hard}
\end{figure}

\begin{table}[htbp]
\centering
\caption{Runtime per iteration runtime in milliseconds for one-sided testing.}
\label{tab:avg-iteration-runtime2}
\begin{tabular}{lcc}
\toprule
\textbf{Method} & \textbf{Average (ms)} & \textbf{Std. Dev. (ms)} \\
\midrule
FTRL+Barrier   & 0.001856 & 0.000502 \\
OFTRL+Barrier  & 0.002424 & 0.000613 \\
ONS            & 0.001153 & 0.000433 \\
CO96           & 0.194680 & 0.048272 \\
OJ23           & 1.117311 & 0.730003 \\
\bottomrule
\end{tabular}
\end{table}

Figure~\ref{exp:one-sided-testing} plots the empirical distribution of rejection times for each method with significance level parameter $\alpha = 0.01$.  
We observe that FTRL+Barrier and OFTRL+Barrier generally lead to faster rejection of the null hypothesis, compared to the baselines.

\begin{figure}[t]
\centering
\includegraphics[width=0.8\textwidth]{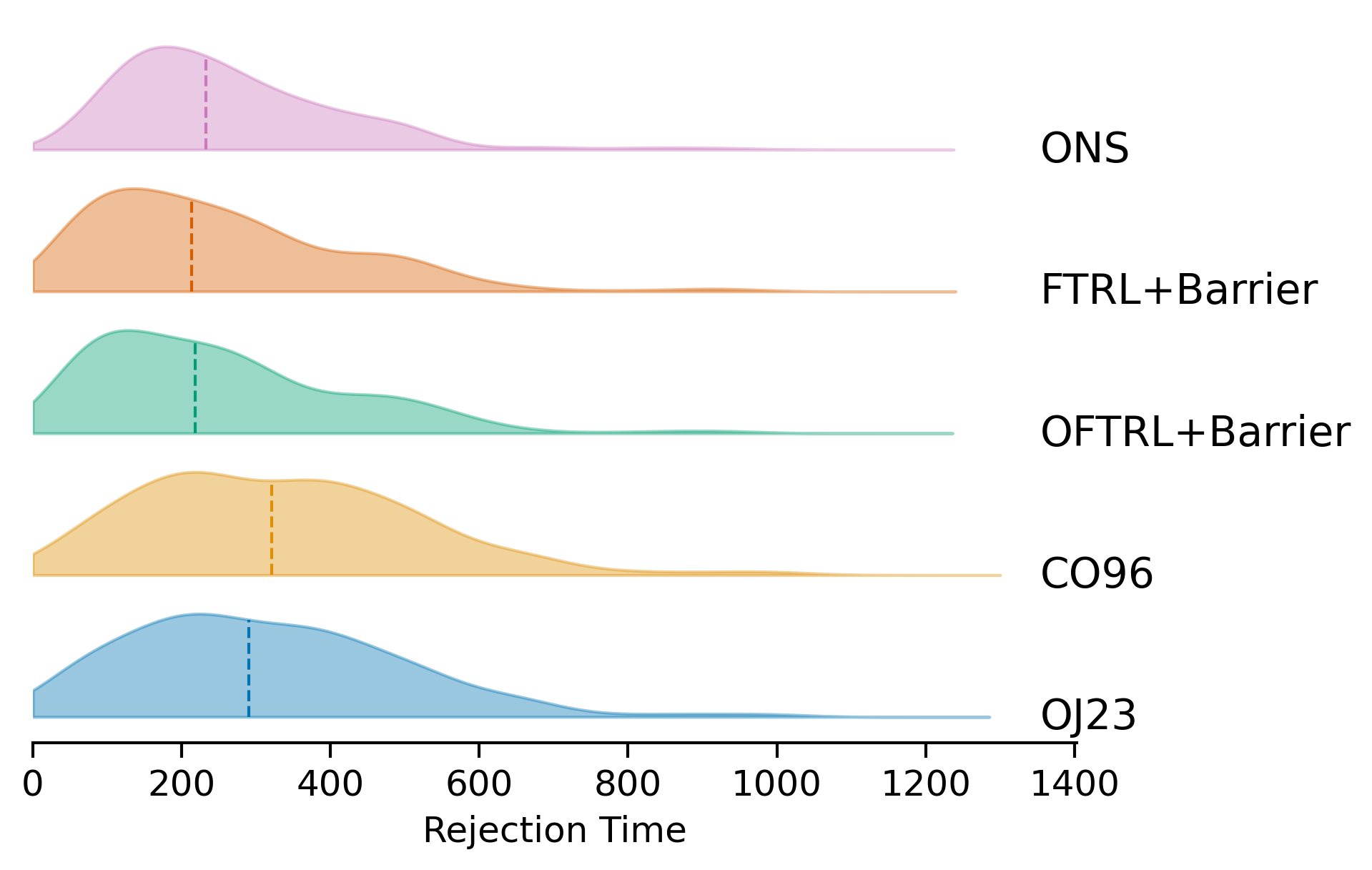}
\caption{
One-sided testing.
Testing against $H_0: \mu_x < 0.3$, where $x_t \sim \mathrm{Bernoulli}(0.4)$.
}
\label{exp:one-sided-testing}
\end{figure}

\section{Conclusion}

Motivated by interior-point methods in optimization, we propose two new strategies,
FTRL+Barrier and OFTRL+Barrier, for sequential hypothesis testing by betting. These strategies enable updates over a larger decision space compared to ONS while avoiding loss explosion and maintaining a similarly lightweight computational cost due to their closed-form updates.  
Our simulation results demonstrate that the proposed methods consistently outperform ONS.  
Furthermore, our empirical findings indicate that these methods are competitive with other more sophisticated betting-based testing approaches in balancing performance under both $H_0$ and $H_1$, while incurring significantly lower per-round runtime costs.  
Future work includes developing improved strategies for obtaining a better estimate of the next gradient for OFTRL+Barrier, while establishing a sublinear regret guarantee.

\section*{Acknowledgments}

The authors appreciate the support from NSF CCF-2403392, as well as the Google Gemma Academic Program and Google Cloud Credits.

\bibliographystyle{alpha}
\bibliography{main}

\appendix

\section{Proof of Lemma~\ref{lem:close-form2}} \label{app:lem:close-form2}

\begin{proof}
In the following, we denote $a:=\eta\sum_{s=1}^t \nabla \ell_s(\theta_s)$. Then,
the Lagrangian of the objective function for FTRL+Barrier (line 6 in Algorithm~\ref{alg:FTRL})
over the constraint set $\K= [0,1]$ is $\mathcal{L}(\theta, \lambda_1, \lambda_2) = a\theta - \ln(\theta) - \ln(1-\theta) - \lambda_1(\theta) + \lambda_2(\theta-1)$,
where $\lambda_1, \lambda_2 \geq 0$ 
are Lagrange multipliers.

The complementary slackness of the KKT optimality condition is
 \begin{equation} \label{slackness-2}
\lambda_1(\theta) = 0 \qquad \text{ and  } \qquad \lambda_2(\theta - 1) = 0,
\end{equation}
while the stationarity of the KKT condition is 
\begin{equation}
\frac{\partial \mathcal{L}}{\partial \theta} = a - \frac{1}{\theta} + \frac{1}{1 - \theta} - \lambda_1 +\lambda_2 = 0. \label{stationarity-2}
\end{equation}
Observe that the stationarity condition \eqref{stationarity-2} cannot be satisfied at $\theta = 0$ and $1$,
hence the solution $\theta \neq 0, 1$, which in turn implies that $\lambda_1=\lambda_2 = 0$ by 
the complementary slackness \eqref{slackness-2}.
Then, solving \eqref{stationarity-2} becomes solving a quadratic equation, i.e.,
$a\theta^2 - (a+2)\theta + 1 = 0$,
which has two roots. 
However, by the primal feasibility condition $\theta\in[0,1]$, only $\theta = \frac{a+2 - \sqrt{4+a^2}}{2a}$ is the valid solution. 
When $a=0$, one can easily conclude that $\theta=1/2$ from \eqref{stationarity-2}.
We hence have completed the proof.
\end{proof}

\section{Proof of Lemma~\ref{lem:barrier}} \label{app:lem:barrier} 

The proof requires the following property regarding self-concordant functions.
\begin{lemma}[Adapted from Theorem 5.1.1 in \cite{nesterov2013introductory}] \label{lem:intersection}
Suppose functions $f(\cdot): \K_f \to \reals $ and $g(\cdot): \K_g \to \reals$ are self-concordant.
Then, $F(\cdot)= f(\cdot) + g(\cdot)$ is self-concordant over the set $\K_f \cap \K_g$.
\end{lemma}

\begin{proof}(of Lemma~\ref{lem:barrier})
Consider $r_{-1}(x):= - \ln (1+x)$.
Its second derivative is $(1+x)^{-2}$, and its third derivative is $-2 / (1+x)^{-3}$. Therefore, $r_{-1}(x)$ is a self-concordant function. Similarly, $r_{1}(x):= - \ln (1-x)$ has its second derivative equal to $(1-x)^{-2}$ and third derivative equal to $2 / (1-x)^3$. Hence, $r_{1}(x)$ is a self-concordant function. Furthermore, $r_0(x):= - \ln (x)$ can be easily checked to be a self-concordant function.

To complete the proof, we apply Lemma~\ref{lem:intersection}: once for $R(x) = -\ln(1 + x) - \ln(1 - x) = r_{-1}(x) + r_1(x)$, which shows that the barrier function is self-concordant on $\K := [-1,1]$; and once for $R(x) = -\ln(x) - \ln(1 - x) = r_{0}(x) + r_1(x)$, which shows that the barrier function is self-concordant on $\K := [0,1]$.
\end{proof}

\section{Proof of Lemma~\ref{lem:FTRL_const} } \label{supp:A}

We first replicate Lemma~\ref{lem:FTRL_const} here.

\begin{mdframed}[backgroundcolor=black!10,rightline=false,leftline=false,topline=false,bottomline=false]
\noindent
\textbf{Lemma~\ref{lem:FTRL_const}}:
\textit{
Denote $G_t:= \sum_{s=1}^t \nabla \ell(\theta_s) $. 
Suppose that there exists a time point $t_0$ such that for all $t \geq t_0$, we have
\begin{equation} \label{growth}
\textbf{(Linear growth of cumulative gradients)} \qquad 
\left | G_t   \right| \geq ct,
\end{equation}
for some $c>0$. 
Then, FTRL+Barrier (Algorithm~\ref{alg:FTRL}) satisfies  
$$\mathrm{Regret}_T(\theta_*) 
\leq \frac{t_0}{8\eta} + \frac{2}{c'^2\eta} \left( \frac{1}{t_0-1} - \frac{1}{T-1} \right )
+ \frac{  R(\theta_*)  }{\eta},$$
where $\eta\leq \frac{1}{4}$ is the parameter, $c' > 0$ is a constant, and $\theta_* \in K$ is any comparator.}
\end{mdframed}

\begin{proof}
For the difference-in-means testing,
$ R(\theta) = -\ln(1 - \theta) - \ln(1 + \theta)$ 
and its Hessian is
\begin{equation}
    \nabla^2 R(\theta)=\frac{1}{(1-\theta)^2}+\frac{1}{(1+\theta)^2} = 
    \frac{2+2\theta^2}{(1-\theta^2)^2}.
    \label{eq:hessian_r}
\end{equation}
In the following, we denote  
$x:= \eta G_{t-1}$ for notation brevity. 
Recall the closed-form expression of $\theta_t$ (Lemma~\ref{lem:close-form}), i.e., $\theta_t = \frac{ 1 - \sqrt{ 1 + x^2  }  }{x} $.
Then, the denominator of the Hessian can be upper-bounded as:
\begin{align}
( 1 - \theta_t^2 )^2 & = \left( 1 - \left( \frac{1 - \sqrt{1+x^2} }{x}   \right)^2  \right)^2 \notag
 = \left( \frac{ 2 \sqrt{1+x^2} -2  }{x^2} \right)^2   \notag
\\ & \leq 
\left( \frac{ 2 + 2 |x| -2  }{x^2} \right)^2   \notag
\\ & \leq \frac{1}{x^2}, \label{denominator}
\end{align}
where the second-to-last inequality above uses $\sqrt{a+b} \leq \sqrt{a} + \sqrt{b}$ for any $a,b \geq 0$.
Combining \eqref{eq:hessian_r} and \eqref{denominator},
we further have
\begin{align}
    \nabla^2 R(\theta_t)=\frac{2+2\theta_t^2}{(1-\theta_t^2)^2}& \overset{(i)}{\geq} \frac{2+2\theta_t^2}{  \frac{1}{(\eta G_{t-1})^2} } \geq 2 \eta_t^2 G_{t-1}^2
    \\ & \overset{(ii)}{\geq} 2 \eta^2 c^2 (t-1)^2 \label{eq:1d_hessian},
\end{align}
where (i) uses \eqref{denominator} and (ii) uses 
the condition of a linear growth of cumulative gradients.

Using the above results, 
when the linear growth condition holds,
we have
\begin{align}
 \|\nabla \ell_t\|_{\theta_t}^{*2} = (\nabla \ell_t(\theta_t))^2 ( \nabla^2 R(\theta_t) )^{-1} \leq \left(\frac{g_t}{1-g_t\theta_t}\right)^2\cdot \frac{1}{ 2 c^2\eta^2 (t-1)^2} & \leq \frac{1}{2(1-\theta_t)^2 c^2\eta^2(t-1)^2} \notag  \\
\\ & \leq \frac{1}{2 (c')^2 \eta^2 (t-1)^2},
 \label{eq:norm_bound}
\end{align}
where the last one is based on the update of $\theta_{t}$, there must exist some constant $c'$ such that $(1-\theta_t)^2 c^2 \geq c'^2$ for all $t$.
Moreover, since $ \eta \|\nabla\ell_t\|_{\theta_t}^*= \eta\sqrt{(\nabla\ell_t)^2 (\nabla^2 R(\theta))^{-1}}\leq \frac{1}{4} $ by Lemma~\ref{lem:gradient_norm}, we always have $ {(\nabla\ell_t)^2 (\nabla^2 R(\theta_t))^{-1}}\leq \frac{1}{16\eta^2} $.  

Therefore, 
\begin{align} 
\mathrm{Regret}_T(\mathrm{FTRLBarrier})
&\leq 2 \eta \sum_{t=1}^{t_0}
(\nabla \ell_t(\theta_t))^2 ( \nabla^2 R(\theta_t) )^{-1} + 2\eta\sum_{t=t_0+1}^{T}
(\nabla \ell_t(\theta_t))^2 ( \nabla^2 R(\theta_t) )^{-1}+
\frac{  R(\theta_*)  }{\eta}\notag,
\\ &
\leq 2 \eta \sum_{t=1}^{t_0}
(\nabla \ell_t(\theta_t))^2 ( \nabla^2 R(\theta_t) )^{-1} + 
2\eta\sum_{t=t_0+1}^{T} \frac{1}{2 c'^2 \eta^2 (t-1)^2}
+
\frac{  R(\theta_*)  }{\eta}\notag,
\\ &
\leq 2 \eta \sum_{t=1}^{t_0}
(\nabla \ell_t(\theta_t))^2 ( \nabla^2 R(\theta_t) )^{-1} + 
\int_{t_0-1}^{T-1} \frac{1}{ c'^2 \eta t^2}
+
\frac{  R(\theta_*)  }{\eta}\notag
\\ &
\leq 2 t_0 \frac{1}{16\eta} + \frac{1}{c'^2\eta} \left( \frac{1}{t_0-1} - \frac{1}{T-1} \right )
+ \frac{  R(\theta_*)  }{\eta}\notag\\
&=\frac{t_0}{8\eta} + \frac{1}{c'^2\eta} \left( \frac{1}{t_0-1} - \frac{1}{T-1} \right )
+ \frac{  R(\theta_*)  }{\eta}\notag\\
&\leq \frac{1}{\eta}\left(\frac{t_0}{8} + \frac{1}{c'^2t_0} 
+  R(\theta_*)\right)\notag.
\end{align}
If we choose $\eta=\frac{1}{4}$, we have a constant regret bound $\frac{t_0}{2} + \frac{16}{c'^2} \frac{1}{t_0}
+ 4R(\theta_*)$. 

Now let us switch to the one-sided testing,
where we have $R(\theta):= -\ln (\theta) - \ln (1-\theta)$ and $K:=[0,1]$.
For the difference-in-means testing,
$ R(\theta) = -\ln(1 - \theta) - \ln(1 + \theta)$ 
and its Hessian is
\begin{equation}
    \nabla^2 R(\theta)=\frac{1}{\theta^2}+\frac{1}{(1-\theta)^2} = \frac{1 - 2 \theta + 2 \theta^2}{ (\theta - \theta^2)^2 }. \label{eq:hessian_r2}
\end{equation}
Recall the closed-form expression of $\theta_t$ (Lemma~\ref{lem:close-form2}), i.e., 
\begin{equation}
\theta_t = \frac{ 2 + \eta G_{t-1} - \sqrt{ 4 + (\eta G_{t-1})^2  }  }{2 \eta G_{t-1} }
= \frac{1}{2} + \frac{1}{\eta G_{t-1}}
- \frac{\sqrt{ 4 + (\eta G_{t-1})^2  }  }{2 \eta G_{t-1} }.
\end{equation}
Then, with a computer-aided analysis, the Hessian has a simplified expression:
\begin{align}
 \nabla^2 R(\theta_t)= (\eta G_{t-1})^2 + 2 \sqrt{ (\eta G_{t-1})^2  +4  } + 4.
\end{align}
Therefore, when the linear condition holds, we have
\begin{equation}
 \nabla^2 R(\theta_t)
 \geq \eta^2 c^2 (t-1)^2 + 2 \eta c (t-1)
 \geq  \eta^2 c^2 (t-1)^2.
 \end{equation}

The remaining analysis of the regret bound essentially follows that for the difference-in-means testing above.

We have now completed the proof.

\end{proof}

\end{document}